\newcommand{\xmark}{\ding{55}}
\definecolor{gray75}{gray}{0.75}
\definecolor{grein}{rgb}{0.058, 0.298, 0.408}
\definecolor{brein}{rgb}{0.55, 0.21, 0.000}
\definecolor{lightblue}{rgb}{0,0.5,0.75}
\pgfplotsset{compat=1.14}  
\newtheorem{theorem}{Theorem}
\newtheorem{proposition}[theorem]{Proposition}
\newtheorem{corollary}[theorem]{Corollary}
\newtheorem{definition}[theorem]{Definition}
\newtheorem{example}[theorem]{Example}
\renewenvironment{proof} {\noindent\emph{Proof:}} {\hfill $\square$\vspace{0.2cm}}
\newcommand{\fun}[1]{\ensuremath{\mbox{\it #1}}}
\newcommand{\var}{\ensuremath{\mathit{var}}\xspace}
\newcommand{\term}{\ensuremath{\mathit{term}}\xspace}
\newcommand{\trg}{\ensuremath{\mathit{triggers}}\xspace}
\newcommand{\fr}{\ensuremath{\mathit{fr}}\xspace}
\newcommand{\cnst}{\ensuremath{\mathit{cnst}}\xspace}
\newcommand{\OB}{\ensuremath{\fun{$\mathbf{O}$}}}
\newcommand{\SO}{\ensuremath{\fun{$\mathbf{SO}$}}}
\newcommand{\RE}{\ensuremath{\fun{$\mathbf{R}$}}}
\newcommand{\bfE}{\ensuremath{\fun{$\mathbf{bf}\textbf{-}\mathbf{E}$}}}
\newcommand{\bfR}{\ensuremath{\fun{$\mathbf{bf}\textbf{-}\mathbf{R}$}}}
\newcommand{\bfSO}{\ensuremath{\fun{$\mathbf{bf}\textbf{-}\mathbf{SO}$}}}
\newcommand{\bfO}{\ensuremath{\fun{$\mathbf{bf}\textbf{-}\mathbf{O}$}}}
\newcommand{\bfX}{\ensuremath{\fun{$\mathbf{bf}\textbf{-}$}}X}
\newcommand{\EQ}{\ensuremath{\fun{$\mathbf{E}$}}}
\newcommand{\D}{\ensuremath{\fun{$\mathcal{D}$}}\xspace}
\newcommand{\DD}{\ensuremath{\fun{$^{\mathcal{D}}$}}}
\newcommand{\dD}{\ensuremath{\fun{$_{\mathcal D}$}}}
\renewcommand{\t}{\ensuremath{\fun{${t}$}}}
\renewcommand{\tt}[1]{\ensuremath{\fun{$_{{t}_{#1}}$}}}
\newcommand{\spp}{\ensuremath{\fun{$\mathit{support}$}}}
\newcommand{\op}{\ensuremath{\mathit{output}}}
\definecolor{gren}{rgb}{0.158, 0.488, 0.408}
\definecolor{bren}{rgb}{0.45, 0.21, 0.000}
\definecolor{ggray}{rgb}{0.65, 0.65, 0.65}
\newcommand{\correction}[1]{#1} 
\newcommand{\shmeiwsh}[1]{#1} 
\newcommand{\corvtwo}[1]{#1}
\title{Characterizing Boundedness in Chase Variants}
\author[Stathis Delivorias, Michel Lecl\`ere, Marie-Laure Mugnier, Federico Ulliana]{Stathis Delivorias, Michel Leclère, Marie-Laure Mugnier, Federico Ulliana\\ University of Montpellier, LIRMM,  CNRS, Inria,
Montpellier, France}
\begin{document}
\maketitle

\begin{abstract}
Existential rules are a positive fragment of first-order logic that generalizes function-free Horn rules by allowing existentially quantified variables in rule heads.
This family of languages has recently attracted significant interest in the context of ontology-mediated query answering. 
Forward chaining, also known as the chase, is a fundamental tool for computing universal models of  knowledge bases, which consist of existential rules and facts. 
Several chase variants have been defined, which differ on the way they handle redundancies. 
A set of existential rules is bounded if it ensures the existence of a bound on the depth of the chase, independently from any set of facts.
Deciding if a set of rules is bounded is an undecidable problem for all chase variants. 
Nevertheless, when computing universal models, knowing that a set of rules is bounded for some chase variant does not help much in practice if the bound remains unknown or even very large.  Hence, we investigate the decidability of the k-boundedness problem, which asks whether the depth of the chase for a given set of rules is bounded by an integer k. 
We identify a general property which, when satisfied by a chase variant, leads to the decidability of k-boundedness. 
We then show that the main chase variants satisfy this property, namely the oblivious, semi-oblivious (aka Skolem), and restricted chase, as well as their breadth-first versions. 
\textit{This paper is under consideration for publication in Theory and Practice of Logic Programming. }
\end{abstract}
%

\correction{\section{Introduction}}
\sloppy
Existential rules (see  \cite{cgk08,blms09,cgl09} for the first papers and \cite{GottlobOPS12,MugnierT14} for introductory courses) are a positive fragment of first-order logic that generalizes function-free Horn rules, \corvtwo{such as} the deductive database query language Datalog, and knowledge representation formalisms such as Horn description logics (see e.g. \cite{dl-lite05,BBL-EL,krh07}).  
More specifically, existential rules are of the form \emph{body $\rightarrow$ head}, where 
\emph{body} and \emph{head} are conjunctions of
atoms (without functions), and variables that occur only in the head are existentially quantified. 
These existentially quantified variables allow one to assert
the existence of unknown individuals, a key feature \corvtwo{for reasoning} on incomplete data with the open-domain assumption.

Existential rules have the same logical form as \corvtwo{the} general database constraints known as tuple-generating dependencies, 
which have long been investigated in database theory \cite{AbiteboulHV}.  
 Reborn under the names of existential rules,  Datalog$^+$ or Datalog$^\exists$, they have attracted significant interest in the last years as ontological languages, especially for 
 ontology-mediated query answering and ontology-based data integration. 


\smallskip
In our setting, a knowledge base (KB) is composed of a set of existential rules, which typically encodes ontological knowledge, 
and a factbase, \corvtwo{a.k.a. instance,} which contains factual data.  
 A factbase is a set of atoms built from constants and variables, \corvtwo{the variables representing \mbox{unknown} individuals, also called (labeled) \emph{nulls} in databases}. The logical translation of a factbase is  an existentially closed conjunction of atoms. In this paper, we focus on the forward chaining process for reasoning on KBs, which consists of iteratively extending the factbase with new facts produced by rule applications, until we reach a fixpoint. 
In the forward chaining process, a rule of the form \emph{body $\rightarrow$ head} can be applied to a factbase $F$ whenever there is a homomorphism $h$ from \emph{body} to $F$. The factbase $F$ is then extended with new atoms obtained by first applying $h$ as a substitution to \emph{head} and then by 
\corvtwo{ renaming each existentially quantified variable (which has no image in $h$) with a fresh variable, i.e., that does not occur in $F$.} Hence, a rule application may produce new (unknown) individuals, \corvtwo{ i.e., nulls.} This is illustrated by the following example.

\begin{example}\label{ex-init}
Consider the KB $\mathcal K = ( F, \mathcal R)$ where 
$F = \{\textit{Human}(\textit{Alice})\}$ \corvtwo{is the factbase}
and 
$\mathcal R$ \corvtwo{is the ruleset containing the} 
single rule \mbox{$R=\forall x.~\textit{Human}(x) \rightarrow \exists y.~\textit{parentOf}(y,x) \land \textit{Human}(y)$}. 
Here, $R$ can be applied to $F$ because of  the homomorphism $\{x \mapsto \textit{Alice}\}$. This produces the atoms $\textit{parentOf}(y_0,\textit{Alice})$ and $\textit{Human}(y_0)$ containing a fresh variable  $y_0$ introduced by safely renaming $y$ in the rule. 
The factbase resulting from the rule application is therefore $\{\textit{Human}(\textit{Alice}), \textit{parentOf}(y_0,\textit{Alice}), \textit{Human}(y_0)\}$, which stands for the logical formula 
$\exists y_0. ~\textit{Human}(\textit{Alice}) \land \textit{parentOf}(y_0,\textit{Alice}) \land \textit{Human}(y_0)$. 
Note that $R$ can now be applied again by mapping $x$ to $y_0$ thereby creating a new individual $y_1$ and producing the atoms $\textit{parentOf}(y_1,y_0)$ and $\textit{Human}(y_1)$. 
In this example, the forward chaining does not terminate. 
\end{example}

Forward chaining with existential rules is also known as the \emph{chase} in databases and a considerable literature has been devoted to its analysis in the context of tuple-generating dependencies \cite{bv84,fkmp05,deutsch-nash-remmel08,marnette09,DBLP:journals/fuin/GrahneO18}. 
As illustrated by Example \ref{ex-init}, the chase may not terminate on a given KB. 
\corvtwo{However, a fundamental property of the chase is that it computes a \emph{universal model} of the  knowledge base, i.e., a model that homomorphically maps to any other model of the knowledge base \cite{deutsch-nash-remmel08}. }
This has a major implication in problems like answering \corvtwo{ontology-mediated queries,} \corvtwo{since a Boolean conjunctive query} (i.e., an existentially closed conjunction of atoms) is logically entailed by a KB if and only if it homomorphically maps to the result of the chase.

\sloppy
Several variants of the chase have been studied, mainly: the \emph{oblivious}  chase  \cite{cgk08}, the \emph{Skolem} chase  \cite{marnette09}, the \emph{semi-oblivious} chase \cite{marnette09}, the \emph{restricted} or standard chase \cite{fkmp05}, the \emph{core} chase \cite{deutsch-nash-remmel08} \mbox{(and its variant,} the equivalent chase \cite{Rocher16}) \footnote{\corvtwo{We could also consider} the \emph{parsimonious chase}, which was introduced in ~\cite{Leone:2012:ECD:3031843.3031846}, \corvtwo{but it is tailored for answering atomic queries and} does not compute a universal model of the KB, hence it is outside the family of chase variants studied here.}.   
All these chase variants compute logically equivalent results.
Nevertheless, they differ on their ability to detect logical redundancies possibly caused by the presence of nulls. Indeed, a factbase that contains nulls may be logically equivalent to one of its strict subsets, in which case we call it redundant. As deciding whether a factbase is redundant is computationally difficult (in fact, NP-complete \cite{DBLP:conf/stoc/ChandraM77}), a given chase variant may choose to detect only specific cases of redundancy.  
Note that, since redundancies can only be due to nulls, all chase variants output exactly the same results on ground factbases and rules without existential variables (i.e., Datalog rules, also called range-restricted rules \cite{AbiteboulHV}). 
On the other hand, the ability to detect redundancies has a direct impact on the termination of the chase. 
\corvtwo{In short,} the oblivious chase blindly performs all possible rule applications and terminates \corvtwo{less often than the other variants}, while the core chase produces factbases with no redundancies and terminates exactly when the KB admits a finite universal model. The rest of the chase variants lie between these two \corvtwo{extremes; they are} presented in the next section. 
The following example aims at illustrating the behavior of the oblivious and core chase variants. For \corvtwo{brevity}, universal quantifiers will be omitted.

\begin{example}\label{ex-bounded-2}
Take the ruleset $\mathcal R = \{p(x,y) \rightarrow \exists z. p(y,z)
\land p(z,y)\}$ and the factbase $F = \{\textit{p(a,b)}\}$.
Both chase variants perform the first rule application, which yields $F_1=\{p(a,b), p(b,z_0),
p(z_0,b)\}$. Then, two new  applications of $R$ are possible, one for each new atom $p(b,z_0)$ and $p(z_0,b)$. The first application would then add the atoms $p(z_0, z_1)$ and $p(z_1,z_0)$ to $F_1$,
yielding $F_2$. However, $F_2$ is logically equivalent to $F_1$ as there is a homomorphism from $F_2$ to $F_1$, which maps  $z_1$ to $b$ and \corvtwo{$z_0$ to itself}. 
\corvtwo{A similar thing happens} for the second rule application which would create the atoms $p(b,z_2)$ and $p(z_2,b)$. These are again redundant as $z_2$ can be mapped to $z_0$.
The oblivious chase simply performs both rule applications without testing for redundancy. This also means that it will go on forever, since each new (redundant) atom brings new rule applications. 
In contrast, the core chase detects that any rule application on $F_1$ yields an equivalent factbase, and outputs $F_1$. 
It is also worth mentioning that in this case the semi-oblivious chase will behave similarly as the oblivious chase, while the restricted chase will behave similarly as the core chase. 
\end{example}

Chase variants can be totally ordered with respect to the inclusion of the sets of knowledge bases on which they \corvtwo{terminate}:  \corvtwo{\[\text{Oblivious }<\text{ Semi-Oblivious }<\text{  Restricted }<\text{ Core}\]}
Here, $\text X_1 < \text X_2$ means that when $\text X_1$ halts on a KB, so does $\text X_2$, and there are KBs for which the converse is false.
Furthermore, the Skolem chase (respectively, the equivalent chase) terminate on the same KBs as the semi-oblivious chase (respectively, the core chase). 
Of~course, none of the chase variants terminates on a KB that does not admit a finite universal model as shown in Example \ref{ex-init}. 
 The termination problem, which asks whether for a given set of rules the chase will terminate on any factbase, is undecidable for all these chase variants 
 \cite{deutsch-nash-remmel08,blm10,GogaczM14}. 
 
Inspired by previous work on Datalog~\corvtwo{(see e.g., \cite{cgkv88,GaifmanMSV93}), } we study the related problem of \emph{boundedness}, which asks if, given a set of rules, the chase terminates on any factbase within a \emph{predefined} depth, i.e., independent from any factbase. The notion of depth is natural for Datalog programs whose evaluation is defined in a breadth-first manner, but requires further technical definitions when addressing several chase variants, which will be presented later. 
Hence, we will focus here on breadth-first chase variants. 
Intuitively speaking, a breadth-first chase is a process that,  starting from a factbase $F = F^0$, proceeds as follows: for each step $i > 0$ it (1) computes all new homomorphisms from rule bodies to $F^{i-1}$ and then (2) performs the rule applications associated with these homomorphisms, according to its own redundancy criterion, which yields $F^i$. 
The depth of a breadth-first chase on a given KB corresponds to the number of steps needed to terminate, i.e.\corvtwo{,} $k$ if the last computed factbase is $F^k$.

Given a chase variant X, we call a set of rules X-\emph{bounded} if there is a bound $k$ such that, for \emph{every} factbase, the X-chase stops within depth at most $k$. 
Of course, since chase variants differ with respect to termination they also differ with respect to boundedness, and each variant raises a distinct notion of boundedness.

Boundedness \corvtwo{implies} several nice properties. First, if a set of rules is X-bounded with bound $k$, then, for any factbase $F$,  the factbase obtained from $F$ after $k$ breadth-first X-chase steps is a universal model of the KB;  the converse is also true for X being the core chase {(or its variant the equivalent chase)}.  Moreover, boundedness also \corvtwo{implies} decidability of fundamental static analysis tasks on queries, i.e., data-independent problems  whose aim is to decide semantic properties of queries that can be exploited to optimize query answering. It ensures the first-order rewritability property \cite{dl-lite07} also known as  finite unification set property \cite{blms11}: 
 any conjunctive query $q$ can be rewritten using the set of rules $\mathcal R$ into a first-order query (and specifically a union of conjunctive queries) $Q$ such that, for any factbase $F$, the answers to $q$ on $(F, \mathcal R)$ are exactly the answers to $Q$ on $F$. Note that the conjunctive query rewriting procedure can be designed  in a such a way that it terminates within $k$ breadth-first steps, with $k$ the bound for the core chase \cite{LeclereMU16}. 
In turn, the first-order rewritability property ensures the decidability of conjunctive query containment under existential rules, which, given two conjunctive queries, asks if the set of answers to the first query is included in the set of answers to the second query, for any \corvtwo{fact}base. 


The importance of the boundedness problem has been recognized already for rules without existential variables. Indeed, the problem has been first posed and studied  for Datalog, where it has been shown to be undecidable \cite{HillebrandKMV95,Marcin1999}. 
 Example \ref{ex-Datalog1} illustrates some cases of bounded and unbounded rulesets in this setting.

\sloppy
\begin{example}\label{ex-Datalog1}
Consider the rulesets $\mathcal R_1=\{R\}$ and  $\mathcal R_2=\{R,R'\}$,
where
$R= \textit{p}(x,y) \land \textit{p}(y,z) \rightarrow \textit{p}(x,z)$ and
$R'= \textit{p}(x,y) \land \textit{p}(u,z) \rightarrow \textit{p}(x,z)$.
The set $\mathcal R_1$  contains a single transitivity rule for the
predicate $p$.
This set is clearly unbounded as, for any integer $k$, \corvtwo{there is} a
factbase \mbox{$F=\{p(a_i,a_{i+1})\ | \ 0\leq i<2^k  \}$} that requires  $k$ chase
steps.
On the other hand, $\mathcal R_2$ also contains a rule that joins
individuals on disconnected atoms. In this case, $i)$ if $R$
generates some facts, then $R'$ generates these same facts as well  and
$ii)$ $R'$ needs to be applied only at the first step, for any $F$, as it
does not produce any new atom at a later step. Therefore, $\mathcal R_2$ is
bounded with the bound $k=1$. Note that since these examples are in Datalog, the specificities
of the chase variants do not play any role.
\end{example}

Example \ref{ex-bounded-2} illustrates boundedness  beyond Datalog rules. The ruleset $\mathcal R$  is not bounded for the oblivious chase, which actually does not even terminate on the given KB. 
However, it can be checked that the core chase terminates with $\mathcal R$ on any factbase after at most one breadth-first step, hence $\mathcal R$ is bounded with $k = 1$ for the core chase.


\corvtwo{Even though} boundedness is undecidable already for Datalog, knowing that a set of rules is bounded for some chase variant does not help much \corvtwo{in practice if the bound remains unknown or is very large}. 
Hence, the goal of this  paper is to investigate decidability of the \emph{$k$-boundedness} problem which asks, for a given chase variant and an integer $k$, whether, for any factbase, the chase halts within depth $k$. 
The degree of boundedness of a ruleset (i.e., the smallest $k$ such that this ruleset is $k$-bounded) can be seen as a measure of the ``recursivity'' of a ruleset. 
Given a factbase $F$, it yields a polynomial bound on the size of the chase output on $F$ (i.e., with respect to the size of $F$). 
More precisely, the number of chase atoms produced at depth $d$ is exponential in $|\textit{body}|^d$, where $|\textit{body}|$ is the maximal size of a rule body. 
Similarly, given a conjunctive query $q$, $k$-boundedness yields a polynomial bound on the size of a rewriting of  $q$ as a union of conjunctive queries. 
Moreover, since the theoretical bound on the chase output considers all possible factbases, the actual number of breadth-first steps required by the chase on a given factbase is expected to be much smaller in practice. 

\smallskip
Our main contribution is to show that $k$-boundedness is indeed decidable for several main chase variants. It is worth noting that a general approach to derive decidability results for generally undecidable static analysis problems like boundedness is to restrict the rule language. Here, by focusing on the $k$-boundedness problem we are able instead to \corvtwo{obtain} results for the whole existential rule language.
Actually, we obtain a strong result by exhibiting a property that a chase variant may  \corvtwo{enjoy}, namely \emph{preservation of ancestry}, and prove that $k$-boundedness is decidable as soon as this property is satisfied.  We then show that it is the case for the oblivious, semi-oblivious and restricted chases, as well as all their breadth-first versions. 
\corvtwo{The decidability of   $k$-boundedness for the core chase 
remains an open question.  }

The paper is organized as follows. Section \ref{preliminaries} introduces preliminary notions, while Section \ref{kbound} defines $k$-boundedness parametrized by the considered chase variant  and states some fundamental properties of chase derivations on which we will rely to obtain our results. Section \ref{sec-k-bounded} presents the main results. 

\section{Preliminaries }\label{preliminaries}

This section is dedicated to basic notions and properties, including the formalization of the chase variants for which $k$-boundedness will be investigated. 

\subsection{Positive Existential Rules}\label{rules}
We consider a first-order setting  \corvtwo{without functional symbols (except constants, which can be seen as 0-ary functional symbols)} nor equality.
A vocabulary is a finite set of predicates (usually denoted with the letters $p,q,r$) and constants (usually denoted with $a,b,c$).
We assume also a countably infinite set of variables (denoted with $x,y,z$) used in formulas. 
A \emph{term} is either a variable or a constant. Each predicate $p$ is associated with a positive integer number, called the \emph{arity} of $p$. 
\corvtwo{An \emph{atom} is of the form $p(e_1,...,e_n)$, where $p$ is a predicate, $n$ is the \emph{arity} of $p$ and $e_1,...,e_n$ are terms.}

A \mbox{\emph{factbase}}, denoted by $F$, is an existentially quantified conjunction of atoms which is closed, i.e., every variable is existentially quantified. 
We will use the notation $\var(F)$, $\cnst(F)$, and $\term(F)$, to refer to the set of variables, constants, and terms that \corvtwo{occur} in $F$. 
It is very convenient to see factbases simply as sets of atoms.
So, for example, $\{p(a,x),q(x,b,c)\}$ can represent the existentially closed conjunction $\exists x.~ p(a,x)\wedge q(x,b,c)$.

A \emph{substitution} $\sigma$ is a mapping from a set of variables to a set of terms (usually represented as a set of single variable mappings, e.g. $\{x\mapsto a, y\mapsto b, ...\}$). 
A \emph{homomorphism} from a set of atoms $F$ to  $F'$ 
is a substitution \corvtwo{$h\colon\var(F)\rightarrow \term(F')$} such that $h(F)\subseteq F'$. It is known that a factbase $F$ logically entails a factbase $F'$ if and only if there exists a homomorphism from $F'$ to $F$ seen as atomsets (e.g., from \cite{DBLP:conf/stoc/ChandraM77}). 
An \emph{isomorphism} is a bijective homomorphism.
A subset $F'\subseteq F$ is a \emph{retract} of $F$ if there exists a substitution $\sigma$ that is the identity on the terms of $F'$ such that $\sigma(F)=F'$. In this case, $\sigma$ is also called a retraction  from $F$ to $F'$.
 A factbase $F$ is a \emph{core} if none of its strict subsets is a retract. 

An \emph{existential rule} $R$ is a first-order formula of the form \mbox{$\forall \bar x.~ \forall\bar y.~ B(\bar x,\bar y)\rightarrow \exists{\bar z}.~ H(\bar x, \bar z)$}, where $\bar x$, $\bar y$ and $\bar z$ are disjoint sets of variables, and $B$ and $H$ are conjunctions of atoms called the \emph{body} and the \emph{head} of the rule, respectively. 
The set of variables $\bar x$ is shared by the body and the head of the rule\corvtwo{; it is called} the \emph{frontier} of the rule, denoted by $\fr(R)$. The set $\bar z$ is called the set of \emph{existential variables} of the rule.
The set $\bar x\cup\bar y$ is the set of \emph{universally quantified variables} of $R$. 
An existential rule $R$ is \emph{Datalog} if it has no existential variables.  
We  call a set of existential rules,  denoted by $\mathcal{R}$, a \emph{ruleset}. 



In the following, universal quantifiers will be omitted in the examples.
Also, it will be sometimes convenient to consider a rule $R$ simply as a pair of sets of atoms $(B,H)$. 
Furthermore, we will use $body(R)$ to denote $B$ and $head(R)$ to denote $H$. 

\correction{Note that an existential rule is generally not equivalent to a clause, even if its head contains a single atom, because of its existential variables. 
However, any existential rule can be transformed into a set of Horn clauses 
by a Skolemization operation, }
which replaces each existentially quantified variable with a Skolem function whose arguments are the frontier variables of the rule. 
For instance, the  rule $R = p(x,w,y) \rightarrow \exists z_1.~ \exists z_2.~ q(x,z_1) \wedge t(z_1,z_2,y) $ is rewritten as two rules
$ p(x,w,y) \rightarrow q(x,f^{z_1}_R(x,y))$
and 
$ p(x,w,y) \rightarrow t(f^{z_1}_R(x,y),f^{z_2}_R(x,y),y)$. 
If moreover, all variables in the factbase are replaced \corvtwo{by fresh constants (0-ary Skolem terms)}, the transformation yields a  positive logic program 
which preserves the logical consequence with respect to all formulas where 
  the Skolem functions do not occur.  
 In Section \ref{sec-chases}, we will present a variant of the chase called ``Skolem chase'' which actually performs the forward chaining on the logical program associated with a set of existential rules.

 Let $F$ be a factbase and $R=(B,H)$ an existential rule. If there \corvtwo{is} a homomorphism $\pi$ from $B$ to $F$, \corvtwo{then} we say that $R$ is \emph{applicable} on $F$ via $\pi$. Then, the pair $\t=(R,\pi)$ is called a \emph{trigger}, and we will also say that $\t$ is {applicable} on $F$.  \correction{We denote by $\pi^s$ the extension of $\pi$ which maps all existential variables in $H$ to fresh variables indexed by the trigger $\t$. More precisely, for each existential variable $z$ in $H$, 
 we define $\pi^s(z)=z_t$.  
 The application of $\t$ on $F$ results in the factbase $F\cup\pi^s(H)$, which is called an \emph{immediate derivation} from $F$ through $\t$.} 
This fixed way to name fresh variables \corvtwo{ensures} that the chase always produces the same atoms when 
the same trigger is applied on different derivations. 
\corvtwo{It} will be useful when comparing forward chaining with the same ruleset but different factbases or chase variants. 
To simplify \corvtwo{notation} in cases where $R,\pi,B$ and $H$ are not specified, $\pi(B)$ is called the \emph{support} of $\t$ and is denoted by $\spp(\t)$ 
and  $\pi^s(H)$ is called the \emph{output} of $\t$ and is denoted by $\op(\t)$. 
Finally, given a trigger $(R,\pi)$
we denote by $\pi_{\mid\fr(R)}$ the restriction of $\pi$ to the frontier variables of $R$.

\subsection{Chase Variants}\label{sec-chases}

The chase is built upon the notion of derivation, which consists of the repeating application of rules from a certain ruleset to a factbase which is {evolving} with every rule application.

\sloppy
\begin{definition}[Derivation]\label{deriv}
A \emph{derivation}  from a knowledge base $(F,\mathcal{R})$ is a (possibly infinite) sequence of pairs 
$\mathcal{D}=(\emptyset, F_0),$ $(\t_1, F_1),(\t_2, F_2),$ $\dots$,
 where  $F_0=F$ and  \corvtwo{$F_{i}$, for each  $i>0$, is }an {immediate derivation} from $F_{i-1}$ through a new trigger $\t_i$ (i.e. $\t_i\neq\t_j$ for all \corvtwo{$i\ne j$}).
\end{definition} 

The sequence of triggers in a derivation is denoted by $\trg(\mathcal{D})$, 
while the set of atoms inferred by $\mathcal D$ is denoted by $F^{\mathcal D}=\bigcup_i F_i$. 
A derivation pair $(\t_i,F_i)$ is called an element of a derivation. 
The $k$-prefix of a derivation $\D$ is the prefix where the first $k$ triggers have been applied, and is denoted by $\mathcal D_{\mid k}$.\footnote{Therefore $\mathcal D_{\mid k}$ comprises the first $k+1$ elements of $\D$, since derivations start with $(\emptyset, F_0)$.}
In this case, we also say that $\mathcal D$ is an extension of $\mathcal D_{\mid k}$. Then, an atom $A$ is \emph{produced} by $\t_i\in\trg(\mathcal{D})$ if 
$i$ is the smallest integer such that 
$A\in F_i\setminus F_{i-1}$.

It is worth noting that if $\t$ produces $A$, then, of course, $A\in \op(\t)$, but 
 the converse may not hold. 
Indeed, there may be an atom in the output of a trigger that belongs to the initial factbase or that  has been produced earlier.
To illustrate, consider $F=\{p(a),q(a),r(a)\}$ and a ruleset $\mathcal R=\{R_1,R_2,R_3\}$, where $R_1:p(x)\rightarrow s(x)$, $R_2:q(x)\rightarrow s(x)$ and $R_3:p(x)\rightarrow r(x)$. These rules are applicable on $F$ with the respective triggers
  $\t_i=(R_i,\{x\mapsto a\})$. 
  Consider two derivations $\D_1,\D_2$ from $(F,\mathcal R)$ where $\t_i$ is applied first in $\D_i$, and then the remaining triggers.
Although the atom $s(a)$ is in the output of both $\t_1$ and $\t_2$, only
$\t_1$ produces $s(a)$ in $\D_1$ and only
$\t_2$ produces $s(a)$ in $\D_2$. Furthermore, $\t_3$ does not produce $r(a)$ in any derivation, as this atom belongs to $F$. 

\begin{definition}[Rank and Depth]
The \emph{rank} of an atom $A$ within a derivation $\mathcal D$ is defined as \mbox{${rank}\dD(A)=0$} if  
$A\in F_0$ and otherwise, let $\t$ be the trigger that produces $A$ in $\D$, then \mbox{${rank}\dD(A)=1+\max\{{rank}\dD(A')\ |\ A'\in \spp(\t)\}$.} 
This notion is naturally extended to triggers: \mbox{${rank}\dD(\t)=1+ \max\{rank\dD(A')\ | \ A'\in\spp(\t)\}$.}
Then, the \emph{depth} of a derivation is the maximal rank of its atoms if it is finite and infinite otherwise. 

\end{definition}

\sloppy
\noindent 
  
%

Informally speaking, the  atom rank does not indicate the number of triggers needed to produce it
but rather the number of parallel rule application steps that are needed to produce it. The notion of rank stems from breadth-first derivations
but  applies to any derivation. 
Importantly, two derivations may produce the same atom at different ranks, and this already occurs for Datalog knowledge bases, as illustrated by the next example.
\begin{example}\label{ex-Datalog2}
Let $F=\{p(a)\}$ and $\mathcal R=\{R_1,R_2,R_3\}$, where $R_1=p(x)\rightarrow q(x)$, $R_2=q(x)\rightarrow r(x)$, $R_3=p(x)\rightarrow r(x)$, and the following  derivations:
\[ \mathcal D_1 = (\emptyset,F), ((R_1,\pi), F_1), ((R_2,\pi), F_2), ((R_3,\pi) ,F_2)\]
\[ \mathcal D_2 = (\emptyset,F), ((R_1,\pi), F_1),  ((R_3,\pi),F_2), ((R_2,\pi),F_2)\]
where $\pi=\{x\mapsto a\}$. In both derivations $F_1 = \{p(a),q(a)\}$ and $F_2 = \{p(a),q(a),r(a)\}$. The atom $r(a)$ has rank 2 in $\mathcal D_1$, while it has rank 1 in $\mathcal D_2$. 
Note that both derivations are maximal (we will later call them terminating), however the depth of $\mathcal D_1$ is 2, whereas the depth of $\mathcal D_2$ is 1.
\end{example}


\medskip

We define a \emph{chase variant} as a class of derivations representing the possible runs of the chase.
The derivations that are proper to each variant are specified by imposing restrictions on which triggers can be applied and when they can be applied. 
 We start by presenting the oblivious (\OB), semi-oblivious (\SO), restricted (\RE), equivalent chase (\EQ), and then move to their breadth-first versions.
 \correction{In short, the oblivious chase applies all possible triggers once, 
 while the semi-oblivious chase does not apply triggers that map a rule frontier in the same way as a previously applied trigger.
 The restricted chase does not apply a trigger if there is a retraction from the resulting factbase to the current factbase, and 
 the equivalent chase does not apply a trigger if the resulting factbase is (logically) equivalent to the current factbase.
 The next example illustrates the behavior of these four chase variants.} 

 \begin{example}\label{ex-chase-var}
 
\smallskip
\correction
{
Consider the knowledge bases \mbox{$\mathcal K_1=(F,\{R_1\})$,} \mbox{$\mathcal K_2=(F,\{R_2\})$,} and \mbox{$\mathcal K_3=(F',\{R_3\})$} built from the factbases 
  \mbox{$F=\{\textit{p}(\textit{a},\textit{a})\}$} and
  \mbox{$F'=\{\textit{p}(\textit{a},  w)\}$}, where $w$ is a variable, 
    and the rules
}
\correction
{
  \mbox{$R_1=\textit{p}(x,y){\rightarrow}~ \exists z.  ~\textit{p}(x,z)$,}
  \mbox{$R_2=\textit{p}(x,y){\rightarrow}~ \exists z.  ~\textit{p}(y,z)$} and
  \mbox{$R_3=\textit{p}(x,y){\rightarrow}\exists z.  ~\textit{p}(x,x)\wedge\textit{p}(y,z)$.}
}

\correction
{
Regarding $\mathcal K_1$, the rule $R_1$ yields infinitely many triggers producing the atoms $\textit{p}(\textit{a},
z_0),$ $\textit{p}(\textit{a}, z_1),$ $\dots$ Hence, the oblivious chase does not halt~on~$\mathcal K_1$. 
Observe that all these triggers map the frontier variable $x$ to the same constant $a$. 
Hence, the semi-oblivious chase applies only the first trigger and halts. 
However, it does not halt on $\mathcal K_2$, while the
restricted chase does.
   Here again, $R_2$ yields infinitely many triggers, producing the atoms 
 $\textit{p}(\textit{a}, z_0),\textit{p}( z_0, z_1),\dots$; since each of them maps the frontier variables
    to new existentials, all these triggers are applied by the semi-oblivious chase.
     However, all generated atoms are redundant with respect to the initial atom
$\textit{p}(\textit{a},\textit{a})$. More precisely, there is a retraction from the factbase obtained after the first application of $R_2$ to the initial factbase, hence the restricted chase halts without producing any atom.
  On the other hand, the restricted chase does not halt on $\mathcal K_3$ while the
equivalent chase does. In this case, the first rule application yields the factbase
  $F''=\{\textit{p}(\textit{a},  w), 
\textit{p}(\textit{a}, \textit{a}), \textit{p}( w, z_0) \}$, where $w$ and $z_0$ are existentially quantified variables. Note that
there is no retraction from $F''$ to $F'$, and hence the restricted chase applies this trigger. The process continues\corvtwo{, because} from the atom $ \textit{p}( w, z_0)$ there is another trigger application which admits no retraction to $F''$. This creates a chain of atoms $\textit{p}(w, z_0),\textit{p}( z_0, z_1),\dots$ making the restricted chase  not terminating on $\mathcal K_3$. 
The equivalent chase terminates since, despite the fact that 
there is no retraction from $F''$ to $F'$, the  
factbase $F''$ is actually redundant and logically
  equivalent~to~$\{\textit{p}(\textit{a},\textit{a})\}$, which is also its core. Since any new trigger would produce equivalent factbases, the equivalent chase does not perform any other rule application and outputs $F''$ (the core chase, described later, would instead output $\{\textit{p}(\textit{a},\textit{a})\}$). 
Finally, note that $\textit{p}(\textit{a},\textit{a})$ is a (finite)
universal model for all knowledge bases $\mathcal K_1, \mathcal K_2,$ and $
\mathcal K_3$.
}
\end{example}

\begin{definition}[X-applicability]
\label{applicability}
Let $\mathcal{D}$ be a finite derivation from $(F,\mathcal{R})$ and $\t$ be a trigger which is applicable on $F^{\mathcal D}$.
Then,~$\t$~is  
 \begin{enumerate}
\item  \OB-\emph{applicable} on $\mathcal D$ if it does not belong to $\trg(\mathcal{D})$;
\item  \SO-\emph{applicable} on $\mathcal D$ if
$\t=(R,\pi)$ and
there exists no trigger $(R,\pi')\in\trg(\mathcal{D})$ such that  $\pi_{\mid\fr(R)}= {\pi'}_{\mid\fr(R)}$;
\item \RE-\emph{applicable} on $\mathcal D$ if there exists no retraction from $F\DD\cup \op(\t)$ to $F\DD$;
\item \EQ-\emph{applicable}  on $\mathcal D$ if there exists no homomorphism from $F\DD\cup \op(\t)$ to $F\DD$.
\end{enumerate} 
\label{monchv}
Let $\text{X} \in \{ \mathbf{O}, \mathbf{SO}, \mathbf{R},\mathbf{E} \}$.
A derivation $\mathcal D$ from a knowledge base $(F,\mathcal R)$ where every trigger
 $\t_i\in\trg(\mathcal{D})$ is X-applicable on the 
 prefix $\mathcal{D}_{\mid i-1}$~of~$\mathcal D$ is called an X\emph{-derivation.} The class of all the X-derivations is the X\emph{-chase}.
\end{definition}

Note that only the definitions of \OB- and \SO-applicability allow one to extend a derivation with a trigger that does not produce any (new) atom, which are instead ruled out by \RE- \corvtwo{and}  \mbox{\EQ-applicability.} 
The corresponding classes of derivations, i.e., chase variants, will be called
\mbox{$\mathbf{O}$-chase,} 
\mbox{$\mathbf{SO}$-chase,} 
\mbox{$\mathbf{R}$-chase,} and 
\mbox{$\mathbf{E}$-chase,} respectively.

A natural strategy for the chase is to proceed in a breadth-first manner, that is, by applying all possible triggers rank by rank. Breadth-first derivation are defined as follows.

\begin{definition}[Breadth-first Derivations]
An X-derivation $\mathcal D$ is said to be
\begin{itemize}
\item
\emph{rank-compatible} if $rank(\t_i)\leq rank(\t_j)$, for all 
$\t_i,\t_j\in\trg(\mathcal{D})$ such that $i<j$.
Whenever $rank(\t_i)< rank(\t_{i+1})$ the index $i$ is called a \emph{rank mark}\corvtwo{; it means} that the first $i$ triggers of $\D$ contain all and only the triggers of rank up to $rank(\t_i)$ of $\D$.
\medskip
\item \emph{breadth-first} if it is rank-compatible and, for every rank mark $i$,
the prefix $\mathcal D_{\mid i}$ cannot be extended to an X-derivation of the same depth. 
\end{itemize}
\end{definition}
Let $\text{X} \in \{ \mathbf{O}, \mathbf{SO}, \mathbf{R},\mathbf{E} \}$ be a chase variant.
The \mbox{$\mathbf{bf}$\textbf{-}X-chase} variant is the subclass of the \mbox{X-chase} comprised exclusively by breadth-first \mbox{X-derivations}. 
 As we will outline in the next section, breadth-first derivations not only 
  represent a natural way \corvtwo{of reasoning} on a knowledge base, but for some chase variants, they also behave better than the other derivations with regard to termination.

\begin{example} Consider the knowledge base from Example \ref{ex-Datalog2}: $\mathcal D_2$ is breadth-first, 
while $\mathcal D_1$ is not rank-compatible, as the rank of the third trigger is strictly smaller than the rank of the second trigger.
\end{example}

The notion of termination relies on \emph{fairness},
\corvtwo{ that is, on} the fact that, according to the chase variant, all applicable triggers have either been applied at some point or became redundant. 
Of course fair derivations may be infinite.
However, a (finite) fair derivation produces a (finite) universal model of the knowledge base.

%
%


\begin{definition}[Fairness and Termination]
An X-derivation $\mathcal D$ is \emph{fair} if whenever a trigger $\t$ 
is X-applicable on $\mathcal{D}_{\mid i}$
there exists a $k>i$ such that

\begin{itemize}
\vspace{-0.4cm}
\item[$\cdot$] either $\t_k=\t$,
\item[$\cdot$] or $\t$ is not X-applicable on $\mathcal{D}_{\mid k}$.
\end{itemize}
An X-derivation is  \emph{terminating} if it is both fair and finite. 
\end{definition}

Note that an X-derivation that is not terminating might be finite, but in this case it cannot be fair.
To conclude the section, let us now link the previous chase variants to some other known chase variants. The semi-oblivious and Skolem chases, both defined in \cite{marnette09}, lead to similar derivations. 
The Skolem chase consists \corvtwo{ in} running the oblivious chase on the Skolemized knowledge base.
This is obtained by  transforming the set of existential rules into a logical program, as described \corvtwo{in Section \ref{rules}}, and 
 by replacing every variable of the factbase with a fresh constant.
As already mentioned, this in turn corresponds to the classical forward chaining procedure for positive logic programs. 
The semi-oblivious and Skolem chase yield isomorphic results, in the sense that they generate exactly the same sets of atoms, up to a bijective renaming of nulls by Skolem terms. Therefore, we chose to focus on just one of them.

The core chase \cite{deutsch-nash-remmel08} and the breadth-first equivalent chase \cite{Rocher16} 
are two 
variants with a similar behavior, which terminate on a knowledge base if and only if this factbase has a finite universal model. 
The core chase proceeds in a breadth-first manner and, at each step, performs in parallel all rule applications according to the restricted chase criterion, \corvtwo{and} then computes a core of the resulting factbase. Hence, the core chase may remove at some step atoms that were introduced at an earlier step. 
After $i$ breadth-first steps, the equivalent chase and the core chase yield logically equivalent factbases, and they terminate on the same inputs.  
This follows from the fact that computing the core after each rule application or after a sequence of rule applications gives isomorphic results, and that 
two finite factbases are logically equivalent if and only if their respective cores are isomorphic.
However, it will be convenient to handle the equivalent chase from a formal point of view because it does not remove any atoms produced \corvtwo{by} a derivation.

\section{$k$-Boundedness}
\label{kbound}
In this section, after defining the notion of \corvtwo{{$k$-boundedness}}, we will present a set of properties of breadth-first derivations \corvtwo{relevant} to the chase variants that will be studied. This will allow us, \corvtwo{on the one hand,} to present some fundamental results that are key to the decidability of \mbox{$k$-boundedness} and, \corvtwo{on the other hand,} to provide a better understanding of boundedness itself.

\subsection{$k$-Boundedness}
As already mentioned, the concept of boundedness was first introduced for Datalog programs. A Datalog program is said to be \emph{bounded} if the number of breadth-first steps of a bottom-up evaluation of the program is bounded independently from any \corvtwo{fact}base (this notion being more precisely called \emph{uniform boundedness} to distinguish it from the notion of \emph{program boundedness} that restricts the set of predicates that may occur in the \corvtwo{fact}base)~\cite{GaifmanMSV93,Abiteboul89,GuessarianV94}. 
We apply this concept to the more general language of existential rules to define $k$-boundedness and parametrize it by the considered chase variants. 
Since every chase behaves differently with respect to termination, 
every chase gives rise to a distinct notion of $k$-boundedness. 
Indeed, a ruleset may be $k$-bounded for one chase variant but not $k$-bounded for another variant which employs a weaker applicability condition.
This is illustrated by the following example.


\begin{example} Consider  $\mathcal R=\{R_1,R_2\}$, where 
        $R_1 = p(x,y)\rightarrow \exists z.~ q(z,x)$
        and
        $R_2 = q(z,x)\rightarrow  \exists w.~ p(x,w)$.
\corvtwo{Then $\mathcal R$ is not $\OB$-bounded  but it is 
$\SO$-bounded; more precisely, $\mathcal R$ is 2-bounded for the \mbox{\SO-chase,} which means that all (fair) $\SO$-derivations built with $\mathcal R$ have depth at most  2.}
To see that, first note that, since the rule bodies have a single atom, 
the saturation of any factbase $F$ is 
\corvtwo{included in} the union of the saturations of each atom in $F$. 
Hence, the depth of a derivation on $F$ is bounded by the maximal depth of a derivation \corvtwo{from an atom in $F$.}
Then notice that every derivation from an atom can only alternate the application of rule $R_1$ with $R_2$. 
Now, to show that $\mathcal R$ is not $\OB$-bounded, take an initial fact $p(a,b)$ and observe that we can build an infinite \OB-derivation. On the other hand, the \mbox{\SO-chase} halts after producing the atom $q(z\tt1,a)$ at rank 1 (by applying $R_1$) and then $p(a,w\tt2)$ at rank 2 (by applying $R_2$). At this point $R_1$ is not \mbox{\SO-applicable} anymore because it has been already applied by mapping its frontier  $x$ to the constant $a$. 
\corvtwo{Indeed, $\SO$-derivations have depth at most two for all possible initial factbases. 
Finally, $\mathcal R$ is 1-bounded for the $\RE$-chase, i.e., all (fair) $\RE$-derivations built with $\mathcal R$ have depth at most  1.
Starting from the fact $p(a,b)$ for instance, the $\RE$-chase does not  produce $p(a,w\tt2)$ at rank~2 because of the retraction $\pi=\{w\tt2 \mapsto b\}$. 
It  follows that $\mathcal R$ is also 1-bounded for the  $\EQ$-chase. }
\end{example}

Another important dimension to the boundedness problem, studied also for chase termination \cite{DBLP:journals/fuin/GrahneO18}, is whether the bound on the chase depth is considered for all ({fair}) derivations or one ({fair}) derivation, as illustrated by the following example. 

\begin{example}
\label{exampleexists}
\correction{Consider the ruleset $\mathcal R_2=\{R,R'\}$ of Example~\ref{ex-Datalog1}. } 
Since $\mathcal R_2$ is Datalog, for every factbase $F$, all derivations from $(F,\mathcal R_2)$ are terminating.
Also, for every integer $k$, there exists a factbase $F$ and a terminating derivation from $(F,\mathcal R_2)$ of depth $k$, where $R$ is applied in all possible ways before applying $R'$.
Note, however, that rule $R'$ computes in one rank everything that $R$ can compute in many ranks. This means that, 
for every factbase $F$, there is also a terminating  derivation from $(F,\mathcal R_2)$ of depth~1 and the chase can halt earlier if a wise prioritization on rules is chosen.
\end{example}


This leads us to the following definition.
\begin{definition}[$k$-Boundedness]\label{kbaou}
Let  X be any chase variant. 
A ruleset $\mathcal R$ is 

\smallskip
\noindent 
  $\forall$-X-\emph{$k$-bounded} if, for each factbase $F$, all 
fair 
 X-derivations from $(F,\mathcal R)$ are of depth at most~$k$,

\smallskip
\noindent 
 $\exists$-X-\emph{$k$-bounded} if, for each factbase $F$, there is a fair X-derivation from $(F,\mathcal R)$ of depth at most~$k$.

\smallskip
\noindent
The $\forall$-X-\emph{$k$-boundedness} (resp. $\exists$-X-\emph{$k$-boundedness}) problem takes as input a ruleset $\mathcal R$ and an integer $k$ and  asks whether $\mathcal R$ is $\forall$-X-$k$-bounded (resp. $\exists$-X-$k$-bounded).

\end{definition}
Note that saying that 
all fair X-derivations from $(F,\mathcal R)$ are of depth at most $k$
is equivalent to say\corvtwo{ing} that 
all X-derivations from $(F,\mathcal R)$ are of depth at most $k$.
One direction of this property is trivial: if all X-derivations are \corvtwo{at most} $k$-deep then all fair X-derivations are \corvtwo{at most} $k$-deep.
For the other direction, we show the contrapositive.
Assume there is an unfair X-derivation of depth strictly greater than $k$.
Then take its shortest prefix of depth $k+1$ (i.e., which ends with the first trigger of rank $k+1$). 
We know that it can be extended by applying at least a trigger (but possibly an infinite number of them) so as to get a fair derivation of depth greater than $k$.
Also note that, by definition, a $k$-deep fair X-derivation is also terminating.

\medskip
It should be clarified that in this work we will focus our attention on $\forall$-X-$k$-boundedness, later  referred as X-$k$-boundedness or simply  $k$-boundedness when we do not need to specify any particular chase variant. However, we will establish a connection between the two versions of the problem that will allow us to transfer our results to $\exists$-X-$k$-boundedness for the (breadth-first)(semi-)oblivious chase variants (Theorem \ref{connection}).
 This leverages on some fundamental properties of breadth-first derivations, that are now presented.

\subsubsection*{Rank Minimality in Breadth-first Derivations}
The notion of the rank of an atom is central for studying $k$-boundedness, as the problem amounts to decid\corvtwo{ing} if  all atoms produced by the derivations of interest have  rank bounded by $k$, independently \corvtwo{of} the initial instance.

\smallskip
As illustrated by Examples \ref{ex-Datalog1}, \ref{ex-Datalog2} and \ref{exampleexists} on Datalog rulesets,
if a chase variant does not impose any constraint on the order in which triggers are applied\corvtwo{,} then the rank of an atom can vary from one derivation to another.
Of course, this happens also for rulesets that are not Datalog.
Take for example the knowledge base $(F,\{R_1,R_2,R_3\})$, where $F=\{p(a,b,c)\}$, \mbox{$R_1=p(x,y,w)\rightarrow \exists z.~ p(y,z,w)$,} \mbox{$R_2=p(x,y,w)\rightarrow p(y,y,w)\wedge  q(w)$} and \mbox{$R_3=q(w)\rightarrow t(w)$.} For any integer {$k\geq 2$}, there is an X-derivation generating the atom $t(c)$ at rank $k$, when $\text{X}\in\{\OB,\SO,\RE,\EQ\}$. Note that for the $\RE$\corvtwo{-} and $\EQ$\corvtwo{-}chase this derivation of arbitrary depth can also be terminating.


\smallskip
It is therefore natural to ask
what is the minimal rank that a given atom can assume in any X-derivation from a given knowledge base 
and also
 whether 
 there is a chase variant that allows one to produce all atoms at their minimal ranks, if a prioritization on triggers is assumed.
 It turns out that breadth-first oblivious derivations set the lower bound for the ranks of atoms. Recall also that this variant uses the weaker form of applicability condition which makes the $\bfO$-chase inferring any atom that can be produced by any other chase derivation of the same depth.

\begin{proposition}
\label{ranklowerbound}
Let $\mathcal D$ be any $\bfO$-derivation from $(F,\mathcal R)$ and $\D'$  any derivation from $(F,\mathcal R)$ of lower or equal depth. 
Then,  $F^{\mathcal D'}\subseteq F^{\mathcal D}$ and $rank_{\mathcal D}(A)\leq rank_{\mathcal D'}(A)$, for all $A\in F^{\mathcal D'}$.
\end{proposition}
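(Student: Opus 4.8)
The plan is to prove both conclusions at once, by induction on the rank an atom carries in $\mathcal D'$, the engine being a saturation property of breadth-first oblivious derivations. I first note that $\mathcal D'$ may be regarded as an \OB-derivation: the conditions defining a derivation (that each $\t_i$ be applicable on $F_{i-1}$ and satisfy $\t_i\notin\trg(\mathcal D_{\mid i-1})$) are exactly those of \OB-applicability, so $\mathcal D'$ is an oblivious derivation that merely need not be breadth-first. This lets me compare it freely with $\mathcal D$.

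The core is the following saturation lemma for $\mathcal D$: writing $d=\mathit{depth}(\mathcal D)$, every trigger $\t$ applicable on $F^{\mathcal D}$ whose would-be rank $s:=1+\max\{rank_{\mathcal D}(A')\mid A'\in\spp(\t)\}$ satisfies $s\leq d$ already belongs to $\trg(\mathcal D)$. To prove it, let $i$ be the shortest prefix of $\mathcal D$ containing all its triggers of rank $\leq s$; by rank-compatibility these form an initial segment, and $F^{\mathcal D_{\mid i}}$ is exactly the set of atoms of rank $\leq s$ occurring in $F^{\mathcal D}$. Since ranks of atoms cannot skip and $d\geq s$, an atom of rank $s$ does occur, so $\mathit{depth}(\mathcal D_{\mid i})=s$; moreover $\spp(\t)\subseteq F^{\mathcal D_{\mid i}}$, as its atoms have rank $\leq s-1$. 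Hence, were $\t$ absent from $\trg(\mathcal D)$, it would be \OB-applicable on $\mathcal D_{\mid i}$ with rank $s$, so appending it would extend $\mathcal D_{\mid i}$ to an oblivious derivation of the very same depth $s$, contradicting the breadth-first condition at the rank mark $i$.

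Granting the lemma, the induction is short. In the base case the rank-$0$ atoms are precisely those of $F=F_0$, shared by both derivations, so the statement holds with equality. For the step, take $A$ with $rank_{\mathcal D'}(A)=r$, where $r\leq\mathit{depth}(\mathcal D')\leq d$, and let $\t$ be the trigger producing $A$ in $\mathcal D'$. Each atom of $\spp(\t)$ has rank $\leq r-1$ in $\mathcal D'$, so by the induction hypothesis it lies in $F^{\mathcal D}$ with rank $\leq r-1$ there; thus $\spp(\t)\subseteq F^{\mathcal D}$, the trigger $\t$ is applicable on $F^{\mathcal D}$, and its would-be rank satisfies $s\leq r\leq d$. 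The lemma then gives $\t\in\trg(\mathcal D)$ with $rank_{\mathcal D}(\t)=s\leq r$, and since $A\in\op(\t)$ the atom $A$ is present in $F^{\mathcal D}$ no later than rank $s$; hence $A\in F^{\mathcal D}$ and $rank_{\mathcal D}(A)\leq s\leq r=rank_{\mathcal D'}(A)$. Taking the union over all ranks yields $F^{\mathcal D'}\subseteq F^{\mathcal D}$ together with the rank inequality.

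I expect the saturation lemma to be the main obstacle, for two reasons. First, the boundary case $s=d$: here $i$ completes the maximal rank of $\mathcal D$ and must still be treated as a rank mark for the breadth-first condition to bite; this is also exactly where the hypothesis $\mathit{depth}(\mathcal D')\leq\mathit{depth}(\mathcal D)$ is essential, since when $\mathcal D'$ is strictly shallower only ranks $s<d$ are ever invoked. Second, the claim that appending $\t$ preserves the depth rests on $\mathit{depth}(\mathcal D_{\mid i})=s$, which in turn relies on the non-skipping of atom ranks. Throughout, the discipline of measuring ranks inside $\mathcal D$ for the lemma but inside $\mathcal D'$ for the induction must be kept, the two being linked only through the induction hypothesis and the defining identity $rank_{\mathcal D}(\t)=1+\max\{rank_{\mathcal D}(A')\mid A'\in\spp(\t)\}$.
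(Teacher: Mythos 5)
Your proof is correct and follows essentially the same route as the paper's: both arguments peel off rank layers and hinge on the same saturation property of breadth-first oblivious derivations (every applicable trigger whose support lies within the current depth must already have been applied), which the paper uses implicitly inside an induction on the depth of $\mathcal D$ while you isolate it as an explicit lemma and induct on the rank of atoms in $\mathcal D'$ instead. The boundary issue you flag at $s=d$ (whether the final index of a finite derivation counts as a rank mark) is a genuine imprecision in the paper's definition that its own proof glosses over in exactly the same way, so it is not a gap specific to your argument.
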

\begin{proof}
By induction on the depth $m$ of $\D$. If $m=0$, then the claim follows as $F^{\mathcal D}=F=F^{\mathcal D'}$.
 Assume \corvtwo{that} $\D$ is of depth $m$. 
 We denote by $\D''$ the derivation obtained by applying the maximal subsequence of $\trg(\D')$ of depth up to $m-1$, in the given order. Then, let us denote by ${\mathcal D_{\mid depth(m-1)}}$ the maximal prefix of $\D$ of depth $m-1$. 
By \corvtwo{the} inductive hypothesis, $F^{\mathcal D''}\subseteq F^{\mathcal D_{\mid depth(m-1)}}$ and 
\corvtwo{$rank_{\mathcal D_{\mid depth(m-1)}}(A)\leq rank_{\mathcal D''}(A)$, for all $A\in F^{\mathcal D''}$}.
Each trigger $\t\in\trg(\D')\setminus\trg(\D'')$
 is \OB-applicable on $F^{\mathcal D_{\mid depth(m-1)}}$, because any atom in $\spp(\t)$ is produced within rank $m-1$, and so $\op(\t)\subseteq F^{\mathcal D}$. Therefore $F^{\mathcal D'}\subseteq F^{\mathcal D}$. Finally, for all $A\in F^{\mathcal D'}$, either $A\in F^{\mathcal D''}$ and then $rank_{\mathcal D}(A)=rank_{\mathcal D_{\mid depth(m-1)}}(A)\leq rank_{\mathcal D''}(A)= rank_{\mathcal D'}(A)$, or $A\in F^{\mathcal D'}\setminus F^{\mathcal D''}$ and then $rank_{\mathcal D'}(A)=m\leq rank_{\mathcal D}(A)$.
\end{proof}

It follows in particular that two $\bfO$-derivations of the same depth produce exactly the same set of atoms. More precisely, these derivations use exactly the same set of triggers but possibly taken in different order.
\footnote{
This does not hold for $\bfSO$-derivations which can produce atoms where fresh variables can be named differently. 
However, all semi-oblivious derivations of the same depth produce isomorphic results.
If a careful naming of the fresh variables upon the frontier variables of the rule applied by the trigger is chosen, \corvtwo{then} one can make all $\bfSO$-derivations \corvtwo{produce} the same sets of atoms. 
}

As breadth-first derivations apply all possible triggers at each rank (according to their criteria), 
it turns out that two breadth-first derivations of the same depth with the same knowledge base yield equivalent sets of atoms. Nevertheless, one derivation can produce more redundant atoms than the other. 
The following properties show that breadth-first oblivious derivations can be mapped by retractions to breadth-first semi-oblivious and restricted derivations of the same depth. Furthermore, these retractions map atoms to atoms of smaller or equal rank. 



\begin{proposition}
[Retraction 
on breadth-first derivations]
\label{babyretractionthe}
\label{retractionthe}
Let $\D$ be any derivation from $(F,\mathcal R)$ and $\D'$ a \bfX-derivation from $(F,\mathcal R)$ of equal or greater depth, with $\text{X} \in$ $\{\SO,\RE\}$. Then there exists a retraction $h$ from $F^{\mathcal D}\cup F^{\mathcal D'}$ to $F^{\mathcal D'}$ such that \corvtwo{$rank\dD(A)\geq rank_{\mathcal D'}\big(h(A)\big)$, for every $A\in F\DD$}.
\end{proposition}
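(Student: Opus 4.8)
The plan is to build the retraction by induction on the depth $m$ of $\mathcal D$, constructing it rank by rank: at each level I reuse, via the induction hypothesis, a retraction that already handles all atoms of rank below $m$, and then extend it over the atoms that $\mathcal D$ produces at rank $m$, exploiting the fact that $\mathcal D'$, being breadth-first, is ``complete'' at rank $m$. The two driving ingredients are therefore (i) a sub-derivation of $\mathcal D$ that isolates the lower ranks, and (ii) the breadth-first applicability condition of $\mathcal D'$ for $\text{X}\in\{\SO,\RE\}$. Note that I do \emph{not} reduce to a $\bfO$-derivation via Proposition~\ref{ranklowerbound}, because a $\bfO$-derivation may have strictly smaller depth than the arbitrary $\mathcal D$ (as in Example~\ref{ex-Datalog2}), so that proposition need not apply to $\mathcal D$.

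For the base case $m=0$ we have $F^{\mathcal D}=F\subseteq F^{\mathcal D'}$, so the identity is the desired retraction. For the inductive step, I would let $\mathcal D^{-}$ be the subsequence of $\trg(\mathcal D)$ consisting of the triggers of rank at most $m-1$, taken in their original order. A short argument shows that $\mathcal D^{-}$ is again a derivation, that $F^{\mathcal D^{-}}\subseteq F^{\mathcal D}$, and — crucially — that $rank_{\mathcal D^{-}}(A)\leq rank_{\mathcal D}(A)$ for every $A\in F^{\mathcal D^{-}}$ (removing higher-rank triggers can only lower ranks; in particular every atom of $F^{\mathcal D^{-}}$ has rank at most $m-1$ in $\mathcal D^{-}$). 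Since $\mathcal D^{-}$ has depth at most $m-1$ and $\mathcal D'$ has depth at least $m$, the induction hypothesis yields a retraction $g^{-}$ from $F^{\mathcal D^{-}}\cup F^{\mathcal D'}$ to $F^{\mathcal D'}$ with $rank_{\mathcal D^{-}}(A)\geq rank_{\mathcal D'}(g^{-}(A))$.

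It remains to extend $g^{-}$ over the atoms of $F^{\mathcal D}\setminus F^{\mathcal D^{-}}$; each such atom has rank exactly $m$ and lies in the output of some rank-$m$ trigger $\t=(R,\pi)$ of $\mathcal D$, whose support is contained in $F^{\mathcal D^{-}}$. Composing gives a trigger $\t'=(R,\,g^{-}\circ\pi)$ applicable on $F^{\mathcal D'}$ whose support has rank at most $m-1$, so $\t'$ has rank at most $m$. Now I invoke breadth-first completeness: at the rank mark of $\mathcal D'$ completing rank $m$, the prefix cannot be extended to an $\text{X}$-derivation of the same depth, hence no trigger of rank at most $m$ — in particular $\t'$ — is $\text{X}$-applicable on the set $F^{\mathcal D'}_{\le m}$ of atoms of $\mathcal D'$ of rank at most $m$. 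When $\text{X}=\SO$ this means some trigger $(R,\pi'')$ with $\pi''_{\mid\fr(R)}=(g^{-}\circ\pi)_{\mid\fr(R)}$ already occurs in $\mathcal D'$ at rank at most $m$; I then define $g$ on the trigger-indexed fresh variables of $\op(\t)$ so as to match the corresponding fresh variables of $\op((R,\pi''))$, sending $\op(\t)$ into $F^{\mathcal D'}$ at rank at most $m$. When $\text{X}=\RE$ the same completeness provides a retraction from $F^{\mathcal D'}_{\le m}\cup\op(\t')$ to $F^{\mathcal D'}_{\le m}$, which I compose with $g^{-}$ to define $g$ on the fresh variables of $\op(\t)$, again landing these atoms in $\mathcal D'$ at rank at most $m$. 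Since distinct triggers introduce disjoint fresh variables, these definitions are mutually compatible and keep $g$ the identity on $F^{\mathcal D'}$, and because all images have rank at most $m=rank_{\mathcal D}(A)$ the rank inequality is preserved.

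I expect the main obstacle to be twofold. First, in the restricted case one must appeal precisely to the breadth-first definition to guarantee that, once rank $m$ is completed, no rank-$(\le m)$ trigger remains $\RE$-applicable, so that the retraction of $\op(\t')$ genuinely exists; the $\SO$ case is more direct but still requires reconciling the two fresh-variable naming schemes. Second, care is needed in assembling the per-trigger image choices into a single, globally well-defined retraction that is the identity on the target $F^{\mathcal D'}$ — this is where the trigger-indexed naming of fresh variables and the monotonicity of ranks under passing to $\mathcal D^{-}$ do the essential bookkeeping work.
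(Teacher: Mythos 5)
Your proof is correct, and its inductive core is the same engine the paper uses: inherit a retraction for the atoms below rank $m$, compose it with each rank-$m$ trigger $\t=(R,\pi)$ of $\D$ to get a trigger $\t'$ applicable on $F^{\mathcal D'}$ at rank at most $m$, and then invoke the breadth-first exhaustiveness of $\D'$ at rank $m$ to land $\op(\t)$ inside $F^{\mathcal D'}$ --- via the (unique) \SO-equivalent trigger already present in $\D'$ in the \SO\ case, and via the retraction witnessing non-\RE-applicability in the \RE\ case. Where you genuinely diverge is the outer structure. The paper first proves the statement for $\D$ a \bfO-derivation, where the ``lower ranks'' are literally the maximal prefix of depth $m-1$, and only afterwards reduces an arbitrary $\D$ to a \bfO-derivation $\D^*$ of the same depth via Proposition~\ref{ranklowerbound}, composing the resulting retraction. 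You instead run the induction directly on an arbitrary $\D$, replacing the prefix by the subsequence $\D^-$ of triggers of rank at most $m-1$; this costs you the (easy but necessary) verification that $\D^-$ is a derivation in which ranks do not increase, which you correctly flag. What your route buys is precisely the point you raise: Proposition~\ref{ranklowerbound} as stated needs a \bfO-derivation whose depth is at least that of $\D$, and such a derivation need not exist when the oblivious saturation is reached at a strictly smaller depth (cf.\ Example~\ref{ex-Datalog2}), so the paper's phrase ``a \bfO-derivation of the same depth as $\D$'' glosses over an edge case that your argument simply never encounters; what the paper's route buys is that the truncation-is-a-derivation step comes for free and the case analysis on $\t'$ being in $\D'$ or not is cleaner. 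One caveat you share with the paper: both arguments read the breadth-first condition as guaranteeing that no trigger of rank at most $m$ remains X-applicable once rank $m$ of $\D'$ is complete, which formally requires the rank mark closing rank $m$ to exist (i.e., $\D'$ continues beyond rank $m$, or is fair/terminating when its depth is exactly $m$).
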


\begin{proof}
\correction{We first show \corvtwo{the claim for \bfO-derivations $\D$.
In the last paragraph of the proof, we will use Proposition~\ref{ranklowerbound} to extend the claim to arbitrary derivations \D.}} 

\smallskip
We use induction on the depth of $\D$.
If $\D$ is of depth $0$, \corvtwo{then}  the claim follows as $h$ is the identity.
Assume that $\D$ is of depth $m$.
Let $\D_{\mid depth(m-1)}$ and $\D'_{\mid depth(m-1)}$ be the maximal prefixes of $\D$ and $\D'$ respectively that are of depth $m-1$.
By \corvtwo{the} inductive hypothesis, there is a retraction $h'$ from 
$F^{{\mathcal D}_{\mid depth(m-1)}}\cup F^{{\mathcal D}_{\mid depth(m-1)}'}$ to  $F^{{\mathcal D}_{\mid depth(m-1)}'}$.
Let \mbox{$\t=(R,\pi) \in \trg(\D) \setminus \trg(\D')$} be any trigger of rank $m$ in $\D$. Then the trigger \mbox{$\t'=(R,h'\circ \pi)$} is $\OB$-applicable on $F^{\mathcal D_{\mid depth(m-1)}'}$.

If $\t'$ is in $\D'$, then its output is in $\D'$, and we can extend $h'$ to $\op(\t)$ with a bijective renaming of the existential variables in $\op(\t)$, \corvtwo{that is, $h'(x_{\tt{}})=x_{\mathtt{t}'}$,} for every existential variable  $x$ that \corvtwo{occurs} in $R$. In this case the atoms of $\op(\t)$ are mapped to atoms of same rank in $\D'$.

If $\t'$ is not in $\D'$, \corvtwo{then} it is not $\RE$-applicable on $F^{\mathcal D_{\mid depth(m-1)}'}$ (note that a trigger which is not \SO-applicable is not \RE-applicable either).
Hence, there is a retraction $h''$ from $\op(\t')\cup F^{\mathcal D_{\mid depth(m-1)}'}$ to $F^{\mathcal D_{\mid depth(m-1)}'}$. So we extend $h'$ by mapping the existential variables in $\op(\t)$ to $h''(\op(\t'))$, \corvtwo{that is, \mbox{$h'(x_{\tt{}})=h''(x_{\mathtt{t}'})$,}} for every existential variable  $x$ that appears in $R$.

Notice that in both cases, the atoms of rank $m$ in $\D$ are mapped to atoms of equal or lower rank in $\D'$. Since the sets of existential variables produced by all triggers of rank $m$ in $\D$ are disjoint, the union of all extensions of $h'$ for all these triggers is
a retraction of $F^{{\mathcal D}_{\mid depth(m)}}\cup F^{{\mathcal D}_{\mid depth(m)}'}$ to $F^{{\mathcal D}_{\mid depth(m)}'}$. 

\smallskip
If now $\D$ is any derivation, \corvtwo{then} we go back to the previous case by using Proposition~\ref{ranklowerbound}, which asserts the existence of a \bfO-derivation $\D^*$ of the same depth as $\D$ such that $F\DD\subseteq F^{\mathcal D^*}$ and \corvtwo{$rank\dD(A)\geq rank_{\mathcal D^*}(A)$, for all $A\in F\DD$.}
Since $F\DD\subseteq F^{\mathcal D^*}$,
the retraction $h$ from $ F^{\mathcal D^*}\cup F^{\mathcal D'}$ to $F^{\mathcal D'}$
is also a retraction from $F^{\mathcal D}\cup F^{\mathcal D'}$ to $F^{\mathcal D'}$ \corvtwo{and we also have \mbox{$rank\dD(A)\geq rank_{\mathcal D'}\big(h(A)\big)$,} for every $A\in F\DD$}.
\end{proof}

Finally, we show that all breadth-first derivations for the oblivious, semi-oblivious and restricted chase agree on the rank of the atoms that they produce, independently from the variant.

\begin{proposition}
\label{stable-rank}\label{bfminrank}
Let $\D$ be a $\mathbf{bf}$\textbf-X-derivation from $(F,\mathcal R)$ and 
$\D'$ be a $\mathbf{bf}$\textbf-Y-derivation from $(F,\mathcal R)$ 
with \mbox{$\text{X},\text{Y}\in \{\OB,\SO,\RE\}$.}
Then $rank_{\mathcal D}(A)=rank_{\mathcal D'}(A)$, for all $A\in F^{\mathcal D}\cap F^{\mathcal D'}$.
\end{proposition}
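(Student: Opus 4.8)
The plan is to exploit the symmetry of the statement: since $\text X$ and $\text Y$ both range over $\{\OB,\SO,\RE\}$, it suffices to prove a single inequality, say $rank_{\mathcal D'}(A)\le rank_{\mathcal D}(A)$ for every common atom $A$, and then recover the reverse inequality by exchanging the roles of $\mathcal D$ and $\mathcal D'$. So I fix $A\in F^{\mathcal D}\cap F^{\mathcal D'}$ and set $r=rank_{\mathcal D}(A)$ and $r'=rank_{\mathcal D'}(A)$, with the goal of showing $r'\le r$.

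I would use the two preceding propositions as an upper and a lower bound on ranks. Beforehand, I would extract from Proposition \ref{retractionthe} the exact consequence I need: a retraction onto $F^{\mathcal D'}$ is by definition the identity on the terms of $F^{\mathcal D'}$, so the retraction $h$ it provides fixes every atom of $F^{\mathcal D'}$; consequently, for \emph{any} derivation $\mathcal D''$ and any $\bfSO$- or $\bfR$-derivation $\mathcal D_1$ of depth at least $depth(\mathcal D'')$, every common atom $B$ satisfies $rank_{\mathcal D''}(B)\ge rank_{\mathcal D_1}(B)$. Dually, Proposition \ref{ranklowerbound} says that a $\bfO$-derivation assigns the smallest possible rank, i.e.\ $rank_{\mathcal D_0}(B)\le rank_{\mathcal D''}(B)$ whenever $\mathcal D_0$ is a $\bfO$-derivation, $\mathcal D''$ is a derivation of depth at most $depth(\mathcal D_0)$, and $B\in F^{\mathcal D''}$.

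With these two bounds the argument becomes uniform and does not depend on $\text X$ at all. I would form $P=\mathcal D_{\mid depth(r)}$, the prefix of the (rank-compatible) derivation $\mathcal D$ of depth exactly $r$; it still produces $A$ at rank $r$, because the whole support chain of $A$ lives within the first $r$ ranks. If $depth(\mathcal D')<r$ then $r'\le depth(\mathcal D')<r$ and there is nothing more to do. Otherwise $depth(\mathcal D')\ge r=depth(P)$, and I split on the variant $\text Y$ of $\mathcal D'$: when $\text Y=\OB$, Proposition \ref{ranklowerbound} applied to $\mathcal D'$ and $P$ gives $r'\le rank_P(A)=r$; when $\text Y\in\{\SO,\RE\}$, the consequence of Proposition \ref{retractionthe} applied to $P$ and $\mathcal D'$ gives $rank_P(A)=r\ge r'$. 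Either way $r'\le r$, and the symmetric application (with $\mathcal D$ breadth-first as well) yields $r\le r'$, hence $r=r'$.

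The conceptual content is light; the points that require care are the depth hypotheses of the two propositions, which is precisely why I introduce the prefix $P$ of controlled depth $r$ and case-split on whether $\mathcal D'$ reaches depth $r$. The one step I expect to be genuinely load-bearing is the observation that the retraction of Proposition \ref{retractionthe} fixes the atoms of its target: this is what converts the inequality $rank_{\mathcal D''}(B)\ge rank_{\mathcal D_1}(h(B))$, stated for the image $h(B)$, into the inequality about $B$ itself that the proof actually consumes.
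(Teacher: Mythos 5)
Your proof is correct, but it takes a genuinely different route from the paper's. The paper first reduces the general statement to a comparison between a \bfX-derivation ($\text{X}\in\{\SO,\RE\}$) and a \bfO-derivation, and then argues by contradiction: it considers the first trigger $\t$ of $\D$ producing a common atom $A$ at a strictly larger rank than in $\D'$, uses the trigger-indexed naming of fresh variables to argue that $A$ must be the instantiation of a head atom containing only frontier variables, deduces that all terms of $A$ already occur in $F^{\mathcal D_{\mid depth(m-1)}}$, and then applies the retraction of Proposition~\ref{retractionthe} to conclude $A\in F^{\mathcal D_{\mid depth(m-1)}}$, contradicting the choice of $\t$. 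You instead obtain the equality directly as two inequalities: truncating $\D$ to its depth-$r$ prefix $P$ to satisfy the depth hypotheses, you invoke the rank-monotonicity clause of Proposition~\ref{ranklowerbound} when $\text{Y}=\OB$ and of Proposition~\ref{retractionthe} when $\text{Y}\in\{\SO,\RE\}$, and the reverse inequality follows by exchanging $\D$ and $\D'$. The step you correctly flag as load-bearing --- that the retraction of Proposition~\ref{retractionthe} is the identity on $\term(F^{\mathcal D'})$ and therefore fixes every common atom, turning $rank_{P}(A)\geq rank_{\mathcal D'}(h(A))$ into $rank_{P}(A)\geq rank_{\mathcal D'}(A)$ --- is exactly what lets you bypass both the contradiction and the frontier-variable/naming-convention argument. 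Your version is uniform in $\text{X}$ and $\text{Y}$, somewhat shorter, and arguably more robust since it does not rely on the convention that fresh variables are indexed by the trigger that creates them; what it gives up is the extra structural information the paper's argument extracts along the way (namely how a doubly-produced atom must arise from a frontier-only head atom).
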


\begin{proof} By Proposition \ref{ranklowerbound}, the property holds for $\bfO$-derivations, because such derivations produce the same atoms at exactly the same rank.
To prove the statement it suffices to show that  for any \bfX-derivation $\D$, with $\text{X}\in$ $\{\SO, \RE\}$, there is a $\bfO$ derivation $\D'$ from $(F,\mathcal R)$  and 
$rank_{\mathcal D}(A)= rank_{\mathcal D'}(A)$ for all $A\in F^{\mathcal D}\cap F^{\mathcal D'}$.
Assume that $\D$ is of depth $m$.
Let $\D'$ be any $\bfO$-derivation of depth at least $m$.
By Proposition \ref{ranklowerbound}, $F^{\mathcal D'}\subseteq F^{\mathcal D}$.
We now proceed by contradiction.
Let $\t$ be the first trigger in $\trg(\D)$ that produces an atom $A$ (common to $F^{\mathcal D}$ and $F^{\mathcal D'}$) such that $rank\dD(A)\not=rank_{\mathcal D'}(A)$. 
Suppose that $rank\dD(\t)=m$. Thus for all common atoms of rank $i< m$, their ranks coincide between the two derivations. By Proposition \ref{ranklowerbound}, 
$rank\dD(A)>rank_{\mathcal D'}(A)$ as $\bfO$ derivation set the lower bound for the rank of atoms. 
Moreover, every atom in $\spp(\t)$ is produced within rank $m-1$ in $\D$, so  
every atom in $\spp(\t)$ is produced within rank $m-1$ also in $\D'$.
Let us denote by $\mathcal D_{\mid depth(m-1)}$ and $\mathcal D_{\mid depth(m-1)}'$ the maximal prefix of $\D$ and $\D'$ of depth $m-1$.
We conclude that $\t$ is X-applicable on ${\mathcal D_{\mid depth(m-1)}}$ and $\OB$-applicable on ${\mathcal D_{\mid depth(m-1)}'}$.

As $\t$ does not produce $A$ at rank $m$ in $\D'$, there is a trigger $\t'\in\trg(\D')$ producing $A$ at a rank smaller than $m$, therefore $A\in F^{\mathcal D'_{\mid depth(m-1)}}$.
Since $A$ belongs to the output of two different triggers $\t$ and $\t'$ in $\D'$,
and fresh variables are named after the trigger that generates them,
we conclude that $A$ is obtained from an atom in the rule head of $\t$ (and $\t'$) that only uses frontier variables.
Therefore all the terms of $A$ are present in $\spp(\t)$, so also in $F^{\mathcal D_{\mid depth(m-1)}}$.

By Proposition \ref{babyretractionthe}, 
there \corvtwo{is} a retraction $h$ from $F^{\mathcal D'_{\mid depth(m-1)}}\cup F^{\mathcal D_{\mid depth(m-1)}}$ 
to $F^{\mathcal D_{\mid depth(m-1)}}$. Because all the terms of $A$ appear in $F^{\mathcal D_{\mid depth(m-1)}}$, we \corvtwo{obtain} $h(A)=A$. Hence \mbox{$A\in F^{\mathcal D_{\mid depth(m-1)}}$,} so $A$ is not produced by $\t$ in $\D$. A contradiction.
\end{proof}


Proposition \ref{stable-rank} 
will  be essential to prove decidability of $k$-boundedness for the breadth-first  semi-oblivious and restricted chase. 
Surprisingly, it does not hold for the breadth-first equivalent chase variant, i.e., with $\text X =\text Y = \bfE$, as atoms can have different ranks depending on the derivation, as shown by the following example.

\begin{samepage}
\begin{example}
For two predicates $p_1,p_2$, we denote \corvtwo{by} $R_{p_1p_2}$ the unary inclusion rule $p_1(x)\rightarrow p_2(x)$. Let  $\mathcal R=\{R_{pq},R_{qp},R_{qr},R_{pr},R_{rs},R_{rp}\}$ and $F=\{p(z_1), q(z_2)\}$, where $z_1$ and $z_2$ are variables.
We construct two 
\mbox{$\bfE$-derivations $\D_1,\D_2$} which produce the atom $p(z_2)$ at different ranks.
Let $h_1=\{x\mapsto z_1\}$ and  $h_2=\{x\mapsto z_2\}$. We describe the derivations in terms of their triggers.
 
 \medskip\samepage{
 $
 \trg(\D_1)=(R_{pq},h_1),(R_{qr},h_2),(R_{pr},h_1),(R_{rs},h_2),(R_{rp},h_2)
 $
 
 $
  \trg(\D_2)=(R_{qp},h_2),(R_{qr},h_2),(R_{rs},h_2)
 $}
 
 \smallskip\noindent
The first rank of $\D_1$ sees the application of three rules producing the atoms $q(z_1)$, $r(z_2)$, and $r(z_1)$ in the given order. At this point the trigger $(R_{qp},h_2)$ producing $p(z_2)$ is $\RE$-applicable but not $\EQ$-applicable, because the homomorphism $h=\{z_2\mapsto z_1\}$ 
from $F^{\mathcal D_{1\mid 3}}\cup\{p(z_2)\}$ to $F^{\mathcal D_{1\mid 3}}$
(recall the notation $\D_{\mid i}$ for the application of the first $i$ triggers of $\D$)
makes this inference redundant. 
Rank 1 is thus complete. At rank 2, two rules starting from the atom $r(z_2)$ are applied. \corvtwo{The atom $s(z_2)$ is produced first, resulting to $h$ \emph{not} being homomorphism 
from $F^{\mathcal D_{1\mid 4}}\cup\{p(z_2)\}$ to $F^{\mathcal D1_{\mid 4}}$. So then we can apply $R_{rp}$ and produce $p(z_2)$. The derivation $\mathcal D_1$ is terminating and the final factbase is $\{p(z_1), q(z_2),q(z_1),r(z_2),r(z_1),s(z_2),p(z_2)\}$. 
But in $\mathcal D_2$ the triggers are applied in a different order, starting from one that produces $p(z_2)$ at rank 1. This derivation also terminates at rank 2 with factbase $\{p(z_1), q(z_2),p(z_2),r(z_2),s(z_2)\}$. Of course the factbases produced by the two derivations are equivalent. }
\end{example}\end{samepage}

\subsubsection*{Breadth-first Derivations: Termination and Depth}

%
%

\corvtwo{We have shown that when $\text X \in \{\OB,\SO,\RE\}$, a breadth-first X-derivation produces all atoms at their lower rank.
Does this imply that, if the X-chase halts, then the $\mathbf{bf}$-X-chase halts as well?}
This actually holds for the oblivious and semi-oblivious chases. Indeed, among all terminating derivations for the oblivious and semi-oblivious chase, the breadth-first are the ones with the smallest depth, as stated by the next proposition, which directly follows from the above results. 

\begin{proposition}
\label{osoe}
Let  $\text{X}\in\{\OB,\SO\}$.
For each terminating X-chase derivation from $(F,\mathcal R)$, there is a terminating $\mathbf{bf}$\textbf{-}X-derivation from $(F,\mathcal R)$
of smaller or equal depth.
Moreover, all terminating \bfX-chase derivations from $(F,\mathcal R)$ have the same depth.
\end{proposition}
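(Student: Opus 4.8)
The plan is to read both statements as consequences of the rank-minimality of breadth-first derivations, which Propositions~\ref{ranklowerbound} and~\ref{stable-rank} supply for $\text X\in\{\OB,\SO\}$. First I would record the convenient reformulation that a \bfX-derivation assigns to every atom its \emph{globally minimal} rank: for any derivation $\mathcal D$ and any $A\in F^{\mathcal D}$ with $rank_{\mathcal D}(A)=r$, the depth-$r$ prefix of $\mathcal D$ produces $A$ at rank $r$, so Proposition~\ref{ranklowerbound}, applied to this prefix against a \bfO-derivation of depth at least $r$ (the remaining case, where the oblivious saturation has depth below $r$, being trivial), yields that the \bfO-rank of $A$ is $\le r$; Proposition~\ref{stable-rank} then transfers this bound to \bfSO. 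Hence for $\text X\in\{\OB,\SO\}$ and every $A$, the rank of $A$ in a \bfX-derivation is at most $rank_{\mathcal D}(A)$ in any derivation $\mathcal D$ producing $A$.

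For the first claim, let $\mathcal D$ be a terminating X-derivation of depth $d$; being fair and finite, it witnesses a finite canonical saturation $S=F^{\mathcal D}$ (for $\SO$, the finite set of realized frontier-restrictions is order-independent, so $S$ is the same atomset up to the trigger-indexed naming of fresh variables). Any fair \bfX-derivation $\mathcal D'$ applies the corresponding finite set of triggers, is therefore finite and terminating, and produces $S$. By the reformulation above, $rank_{\mathcal D'}(A)\le rank_{\mathcal D}(A)$ for every $A\in S$, whence $\mathit{depth}(\mathcal D')=\max_{A\in S}rank_{\mathcal D'}(A)\le\max_{A\in S}rank_{\mathcal D}(A)=d$.

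The second claim is immediate from rank-stability. If $\mathcal D_1$ and $\mathcal D_2$ are two terminating \bfX-derivations, both produce $S$, and Proposition~\ref{stable-rank} (with $\text X=\text Y$) gives $rank_{\mathcal D_1}(A)=rank_{\mathcal D_2}(A)$ for every $A\in F^{\mathcal D_1}\cap F^{\mathcal D_2}=S$; taking the maximum over $S$ yields $\mathit{depth}(\mathcal D_1)=\mathit{depth}(\mathcal D_2)$.

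The main obstacle is not the inequalities but the justification hidden in the first sentence of the second paragraph: that a single terminating X-derivation forces \emph{every} fair \bfX-derivation to terminate and to reach the same saturation. For $\OB$ this is the canonicity of the oblivious saturation, but for $\SO$ it rests on the order-independence of the set of realized frontier-restrictions, and one must also align the semi-oblivious atoms with the oblivious ones -- through the trigger-indexed naming of fresh variables -- so that Proposition~\ref{stable-rank} can be invoked across the two variants. It is precisely for this alignment that one may need an auxiliary \bfO-derivation that does not terminate (the oblivious chase can diverge where the semi-oblivious one halts, as in Example~\ref{ex-chase-var}); this causes no difficulty, since Propositions~\ref{ranklowerbound} and~\ref{stable-rank} are stated without any termination hypothesis.
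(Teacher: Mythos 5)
Your proposal is correct and follows essentially the same route as the paper, whose proof consists precisely of invoking Propositions~\ref{ranklowerbound} and~\ref{stable-rank} together with the observation that all \bfSO-derivations of the same depth produce isomorphic results. Your write-up is a faithful (and more explicit) expansion of that argument, including the same caveat about aligning semi-oblivious atoms with oblivious ones via the trigger-indexed naming of fresh variables.
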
  
\begin{proof}
The result directly follows from Propositions \ref{ranklowerbound} and \ref{stable-rank} for the \bfO ~variant. For the \bfSO ~variant we furthermore argue that all \bfSO-derivations of the same depth produce isomorphic results. 
\end{proof}

The case of the restricted chase is more complex, since, {for a given factbase}, some fair derivations may terminate, while others may not. It may happen that all breadth-first derivations terminate (even with depth less than a predefined number $k$), but there is a fair non-breadth-first derivation that does not terminate. It may also be the case that no breadth-first derivation terminates, but there is a non-breadth-first derivation that terminates (even with predefined depth less than $k$), as illustrated by the next example.

\begin{example}\label{example20} Let $F=\{p(a,b)\}$ and $\mathcal R=\{R_1,R_2,R_3\}$ with $R_1=p(x,y)\rightarrow \exists z.~ p(y,z)$, \mbox{$R_2=p(x,y)\rightarrow\exists z.~ q(y,z)$} and $R_3=q(y,z)\rightarrow p(y,y)$. It is easy to see that a breadth-first \RE-chase derivation in this knowledge base cannot be terminating. However by applying only $R_2$ on $F$ and then $R_3$ on the new atom, we obtain a terminating \RE-chase derivation. {Note also that, for any factbase, there is a terminating \RE-chase derivation of depth at most 2.}
\end{example}

\noindent Hence, in the case of the restricted chase, breadth-first derivations are not necessarily derivations of minimal depth. In contrast, this holds for rank-compatible $\RE$-chase derivations 
(which differ from breadth-first derivations because not all active triggers of a given rank are necessarily applied). 
\corvtwo{\begin{proposition}\label{res}
For each terminating $\mathbf{R}$-derivation from $(F,\mathcal R)$,
there exists a terminating rank-compatible \mbox{$\mathbf{R}$-derivation} from $(F,\mathcal R)$ 
of smaller or equal depth.
\end{proposition}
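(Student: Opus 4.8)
The plan is to start from a terminating $\mathbf{R}$-derivation $\mathcal D$ of depth $d$ and transform it, through a sequence of local rearrangements, into a rank-compatible terminating $\mathbf{R}$-derivation of depth at most $d$, keeping the invariant ``terminating $\mathbf{R}$-derivation'' at every stage. The transformation is a bubble sort on the trigger sequence driven by the ranks computed in the \emph{current} derivation: as long as the derivation is not rank-compatible there are consecutive triggers $\t_i,\t_{i+1}$ with $rank(\t_i)>rank(\t_{i+1})$, and one step pulls the lower-rank trigger $\t_{i+1}$ in front of $\t_i$. I would control progress with the lexicographic measure (length of the derivation, number of inversions $i<j$ with $rank(\t_i)>rank(\t_j)$), which is well-founded over the naturals.

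First I would record two structural facts that make the move legal. Since every atom of $\spp(\t_{i+1})$ has rank strictly below $rank(\t_i)$, none of them is produced by $\t_i$, so $\spp(\t_{i+1})\subseteq F_{i-1}$ and the homomorphism of $\t_{i+1}$ already maps into $F_{i-1}$. Moreover, $\mathbf{R}$-applicability of a fixed trigger transfers downward: any retraction from $F_{i-1}\cup\op(\t_{i+1})$ to $F_{i-1}$ witnessing non-applicability on $F_{i-1}$ extends by the identity on $\op(\t_i)$ to a retraction from $F_{i}\cup\op(\t_{i+1})$ to $F_{i}$, since the existential variables introduced by $\t_i$ and $\t_{i+1}$ are disjoint and the frontier images of $\t_{i+1}$ lie in $F_{i-1}$. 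Taking the contrapositive, the $\mathbf{R}$-applicability of $\t_{i+1}$ after $\t_i$ in $\mathcal D$ yields its $\mathbf{R}$-applicability already on $F_{i-1}$, so $\t_{i+1}$ may be applied first.

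Then the step splits according to whether $\t_i$ survives. If $\t_i$ is still $\mathbf{R}$-applicable on $F_{i-1}\cup\op(\t_{i+1})$, the two triggers are simply swapped: the factbase after both applications is unchanged, the suffix proceeds verbatim, and because $\spp(\t_i)\subseteq F_{i-1}$ no rank is affected, so the depth is preserved while the inversion count drops. Otherwise $\op(\t_i)$ has become redundant, i.e.\ there is a retraction $\sigma$ from $F_{i-1}\cup\op(\t_{i+1})\cup\op(\t_i)$ onto $F_{i-1}\cup\op(\t_{i+1})$; here I would delete $\t_i$ and fold the suffix through $\sigma$, replacing each later $(R,\pi)$ by $(R,\sigma\circ\pi)$ and discarding any image that becomes non-$\mathbf{R}$-applicable. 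The resulting output is a retract of $F^{\mathcal D}$; since a retract of an $\mathbf{R}$-saturated factbase is again $\mathbf{R}$-saturated (compose the witnessing retraction of $F^{\mathcal D}$ with the retraction onto the retract), the folded derivation ends with no applicable trigger and is therefore still terminating, and its length has strictly decreased. In both cases the lexicographic measure drops, so the procedure halts at a rank-compatible terminating $\mathbf{R}$-derivation.

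The hard part will be the depth bound in the folding case: one must verify that pushing the whole suffix through $\sigma$ never assigns an atom a rank larger than it had in $\mathcal D$. This is delicate precisely because $\mathcal D$ need not be rank-compatible, so $F_{i-1}\cup\op(\t_{i+1})$ may already contain high-rank atoms; the clean statement ``$\sigma$ strictly lowers ranks'' is therefore not available verbatim, and I would instead establish a rank-non-increasing retraction by an induction paralleling the proof of Proposition~\ref{babyretractionthe}, tracking ranks rank-by-rank as the folded derivation is rebuilt in rank order. Everything else---availability of $\t_{i+1}$'s support, downward transfer of applicability, preservation of termination, and the inversion bookkeeping---is routine once this rank estimate is in place.
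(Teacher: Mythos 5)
Your overall strategy differs from the paper's: you sort locally by adjacent transpositions with a lexicographic termination measure, whereas the paper performs a single global stable sort of $\trg(\mathcal D)$ by rank and then takes the \emph{maximal} $\RE$-derivation using those triggers in that order, dropping whichever become non-applicable. Crucially, the paper never modifies a surviving trigger: it keeps every homomorphism as is, collects the retractions $\sigma_1,\dots,\sigma_m$ witnessing the redundancy of the dropped triggers into a single substitution $\tau$ with $\tau(F^{\mathcal D})=F^{\mathcal D'}$, and then kills any allegedly new $\RE$-applicable trigger on $\mathcal D'$ by composing $\tau$ with the retraction that $\mathcal D$'s termination provides. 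This avoids the entire ``fold the suffix through $\sigma$'' machinery.

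That folding is where your proposal has a genuine gap, and you acknowledge it yourself: the claim that replacing each later trigger $(R,\pi)$ by $(R,\sigma\circ\pi)$ never increases the rank of any atom is exactly the hard point, and your plan to ``parallel Proposition~\ref{babyretractionthe}'' does not go through as stated, since that proposition requires the target derivation to be breadth-first, which neither derivation here is. The folding also introduces a second problem you do not address: fresh nulls are indexed by the trigger that creates them, so $(R,\sigma\circ\pi)$ produces \emph{differently named} existentials than $(R,\pi)$; the folded suffix's output is therefore only isomorphic to a retract of $F^{\mathcal D}$, not a subset of it, and both the ``retract of a saturated factbase is saturated'' step and the rank bookkeeping must be carried through this renaming. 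Finally, your swap-case claim that ``no rank is affected'' overlooks atoms common to $\op(\t_i)$ and $\op(\t_{i+1})$ (head atoms over frontier variables only): swapping can change which trigger produces such an atom, lowering its rank and potentially creating new inversions downstream, so even the well-foundedness of your measure needs more care. The paper's single-sort-then-prune argument sidesteps all three difficulties; I would recommend restructuring along those lines rather than repairing the local-rearrangement scheme.
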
}

\noindent
\begin{proof} 
Let $\mathcal{D}$ be a terminating $\mathbf{R}$-derivation from $(F,\mathcal{R})$. 
Let $\mathcal{T}$ be a sorting of $\trg(\D)$ such that the rank of each element is greater or equal to the rank of its predecessors. 
Let $\mathcal{D}'$ be the maximal $\RE$-derivation 
from $(F,\mathcal R)$
that only uses  the triggers of $\mathcal T$, in the given order.
\corvtwo{Now, if $\trg(\D')=\trg(\D)$, then $\D'$ is rank-compatible; moreover, $\D$ is terminating, hence $
\D'$ is also terminating, because both derivations produce the same atomsets.
Otherwise, $\trg(\D')\neq\trg(\D)$ is due to some of triggers occurring in $\D$ that are no longer \mbox{\RE-applicable} in $\mathcal{D}'$. We will show that $\D'$ is a terminating \mbox{\RE-derivation} in this case as well. }\smallskip
\\
\indent
Let $\t_1\dots \t_m$  be the triggers of $\trg(\D')\setminus\trg(\D)$ that were not \mbox{\RE-applicable} when constructing $\mathcal{D}'$, where $\t_i=(R_i,\pi_i)$ and $R_i=H_i\rightarrow B_i$.
So \mbox{$F^{\mathcal D}=F^{\mathcal D'}\cup \pi_1^s(H_1)\cup\cdots\cup\pi_m^s(H_m)$.}
We know that for every $i\in\{1,...,m\}$, there is a retraction $\sigma_i$ from 
  ${\pi_i^s(H_i)}\cup {F^{\mathcal D'}}$ to ${F^{\mathcal D'}}$.
Since with $\sigma_1,...,\sigma_m$, only new variables are mapped to different terms (and all other variables are mapped to themselves), we can define the substitution $\tau=\bigcup_{i=1}^m \sigma_i$ which has the property that $\tau\big(F^{\mathcal D'}\cup \pi_1^s(H_1)\cup\cdots\cup\pi_m^s(H_m)\big)=F^{\mathcal D'}$.\smallskip
\\
 \indent
Suppose that there is a new trigger $\t=(R,\pi)$ with $R=B\rightarrow H$
which is \RE-applicable on $\mathcal{D}'$ \corvtwo{, thus there is a homomorphism from $B$ to $F^{\mathcal D'}$}. 
Then, since $F^{\mathcal D'}\subseteq F^{\mathcal D}$, 
\corvtwo{there is a homomorphism from $B$ to $F^{\mathcal D}$}. But because $\mathcal{D}$ is a terminating \mbox{\RE-derivation,} $\t$ is not \RE-applicable on $\D$. 
We conclude that there is a retraction $\sigma$ from $\pi^s(H)\cup F^{\mathcal D}$ to $F^{\mathcal D}$.
Since the new variables created from $\t_1,\dots,\t_m,\t$ are all different,  the composition $\tau\circ\sigma$ is  a retraction from 
$\pi^s(H)\cup F^{\mathcal D'}$ to $F^{\mathcal D'}$,  which leads us to conclude that no such trigger $(R,\pi)$ is \mbox{\RE-applicable} on $\D'$. \corvtwo{This is} a contradiction.
\end{proof}\smallskip
\\
\noindent\corvtwo{
However, note that different terminating rank-compatible $\RE$-derivations (and so also \mbox{\bfR-derivations)} from the same knowledge base may have different depths (see  \mbox{Example~\ref{example20}).}}
Thus, an important difference between the breath-first (semi-)oblivious and restricted chase variants is that although all terminating breadth-first restricted derivations agree on the ranks of the common atoms they produce (Proposition \ref{stable-rank}), they are not guaranteed to have the same depth.

\medskip
Putting everything together, we obtain a characterization of 
$\exists$-X-$k$-boundedness
in terms of 
$\forall$-X-$k$-boundedness for the oblivious and semi-oblivious chases, as well as their breadth-first variants.

\begin{proposition}
\label{connection}
Let X $\in\{\OB,\SO\}$ and let $\mathcal R$ be a ruleset. Then the following statements are equivalent: 
\begin{enumerate}
\item $\mathcal R$ is $\exists$-X-$k$-bounded;
\item $\mathcal R$ is  $\exists$-$\mathbf{bf}$-X-$k$-bounded;
\item $\mathcal R$ is   $\forall$-$\mathbf{bf}$-X-$k$-bounded;
\end{enumerate}
\end{proposition}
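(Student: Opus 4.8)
The plan is to prove the cycle $(1)\Rightarrow(3)\Rightarrow(2)\Rightarrow(1)$, relying throughout on the fact that, for $\text X\in\{\OB,\SO\}$, the breadth-first chase is deterministic both in whether it terminates and in the depth at which it does so. Two of the three implications are essentially immediate. For $(2)\Rightarrow(1)$ I would only note that every \bfX-derivation is an X-derivation, so the fair \bfX-derivation of depth at most $k$ witnessing $(2)$ is already a fair X-derivation of depth at most $k$, which is what $(1)$ requires. For $(3)\Rightarrow(2)$ I would first observe that a fair \bfX-derivation from $(F,\mathcal R)$ always exists: applying at each successive rank all X-applicable triggers of that rank yields a rank-compatible derivation satisfying the breadth-first condition, and any such derivation is fair by construction; under $(3)$ this derivation has depth at most $k$, providing the existential witness.

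The substance of the proof lies in $(1)\Rightarrow(3)$. Fixing a factbase $F$, I would start from a fair X-derivation $\mathcal D$ of depth at most $k$ guaranteed by $(1)$; since it is fair and finite it is terminating, so Proposition~\ref{osoe} yields a terminating \bfX-derivation $\mathcal D^*$ of depth $m\le k$. It then remains to bound the depth of an \emph{arbitrary} fair \bfX-derivation $\mathcal D'$. When $\mathcal D'$ is terminating, the second part of Proposition~\ref{osoe} forces $depth(\mathcal D')=m\le k$, so nothing more is needed.

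The hard part will be to rule out the possibility that $\mathcal D'$ is fair but infinite while a terminating \bfX-derivation exists. Here I would argue by contradiction using the determinism of breadth-first derivations up to a fixed depth. Let $\mathcal D'_{\mid depth(m)}$ be the maximal prefix of $\mathcal D'$ of depth $m$; being a prefix of a breadth-first derivation closed at a rank mark, it is itself a \bfX-derivation of depth $m$. By the remark following Proposition~\ref{ranklowerbound}, two \bfO-derivations of equal depth produce exactly the same atomset, and for the semi-oblivious case the analogous isomorphism of results holds; hence $F^{\mathcal D'_{\mid depth(m)}}$ and $F^{\mathcal D^*}$ coincide (for $\OB$) or are isomorphic (for $\SO$). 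Since $\mathcal D^*$ is terminating, no trigger is X-applicable on $F^{\mathcal D^*}$, and therefore none is X-applicable on $F^{\mathcal D'_{\mid depth(m)}}$. This contradicts $\mathcal D'$ possessing any trigger of rank $m+1$, so $\mathcal D'$ cannot be infinite. Consequently every fair \bfX-derivation terminates with depth at most $k$, which is exactly $(3)$.

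The delicate point, and the one I would justify most carefully, is this final determinism step: its whole force comes from the order-independence of the set of triggers applied by the oblivious and semi-oblivious breadth-first chases up to a given rank, as encapsulated by Proposition~\ref{ranklowerbound} together with the isomorphism of \bfSO-results. With that granted, the cycle closes and statements $(1)$, $(2)$ and $(3)$ are equivalent.
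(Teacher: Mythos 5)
Your proof is correct and follows essentially the same route as the paper, whose entire proof is the one-liner ``Follows from Proposition~\ref{osoe}''; your implications all ultimately rest on that proposition together with the order-independence of breadth-first (semi-)oblivious derivations, exactly as intended. Your explicit argument ruling out an infinite fair $\mathbf{bf}$\textbf{-}X-derivation (via the determinism of the depth-$m$ prefix) fills in a detail the paper leaves implicit, but it is a refinement of, not a departure from, the paper's approach.
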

\begin{proof}
Follows from Proposition \ref{osoe}.
\end{proof}

In the next section we focus on $\forall$-k-boundedness and show that it is decidable for the oblivious and semi-oblivious chase as well as their breadth-first versions. \corvtwo{By Proposition \ref{connection}, this will in turn imply  decidability of $\exists$-k-boundedness for those chase variants. }

\section{Decidability of $k$-boundedness in Chase Variants}\label{sec-k-bounded}
%

Our approach to the study of $k$-boundedness is based on identifying a property of chase variants that implies decidability of the problem, namely \emph{preservation of ancestry}. This property is \corvtwo{satisfied} by the oblivious, semi-oblivious and restricted chase, as well as their breadth-first variants, which implies decidability of $k$-boundedness for all these variants. The interest of this approach is that we abstract away from a particular variant thereby providing a proof schema for decidability that can be applied to all chases preserving ancestry. Moreover, we identify a stronger property, namely \emph{heredity}, which is not \corvtwo{enjoyed} by the breadth-first semi-oblivious and breadth-first restricted chases. While preservation of ancestry is sufficient \corvtwo{for} decidability of $k$-boundedness, it remains interesting to consider heredity, as it leads to simpler proofs. 
The next table summarizes the results obtained concerning these properties.


\begin{table}[h!]
        \centering           
\begin{longtable}{|p{0.23\textwidth}|p{0.08\textwidth}|p{0.08\textwidth}|p{0.08\textwidth}|p{0.08\textwidth}|p{0.08\textwidth}|p{0.08\textwidth}|p{0.07\textwidth}|p{0.08\textwidth}|}
\hline \vspace{-1em}
  & 
 \vspace{-1em}\begin{equation*}
\OB
\vspace{-1em}\end{equation*} & 
\vspace{-1em}\begin{equation*}
\bfO
\vspace{-1em}\end{equation*} & 
\vspace{-1em}\begin{equation*}
\SO
\vspace{-1em}\end{equation*} & 
\vspace{-1em}\begin{equation*}
\bfSO
\vspace{-1em}\end{equation*} & 
\vspace{-1em}\begin{equation*}
\RE
\vspace{-1em}\end{equation*} & 
\vspace{-1em}\begin{equation*}
\bfR
\vspace{-1em}\end{equation*} & 
\vspace{-1em}\begin{equation*}
\EQ
\vspace{-1em}\end{equation*} & 
\vspace{-1em}\begin{equation*}
\bfE
\vspace{-1em}\end{equation*}  \\
\hline \vspace{-2mm}
\begin{center} Heredity\end{center}\vspace{-4mm} & \vspace{-4mm}\begin{center}
$\displaystyle \checkmark $
\end{center}\vspace{-4mm}
 & \vspace{-4mm} \begin{center}
$\displaystyle \checkmark $
\end{center}\vspace{-4mm}
 & \vspace{-4mm}\begin{center}
$\displaystyle \checkmark $
\end{center}\vspace{-4mm}
 & \vspace{-4mm}\begin{center}
 \xmark 
\end{center}\vspace{-4mm}
 & \vspace{-4mm}\begin{center}
$\displaystyle \checkmark $
\end{center}\vspace{-4mm}
 & \vspace{-4mm}\begin{center}
 \xmark 
\end{center}\vspace{-4mm}
 & \vspace{-4mm}\begin{center}
 \xmark 
 \end{center}\vspace{-4mm}
 & \vspace{-4mm}\begin{center}
 \xmark 
\end{center}\vspace{-4mm}
  \\  
\hline  \vspace{-4mm}\begin{center}
 \resizebox{3cm}{!}{Preservation of Ancestry} \end{center}\vspace{-4mm}& \vspace{-4mm}\begin{center}
$\displaystyle \checkmark $
\end{center}\vspace{-4mm}
 & \vspace{-4mm}\begin{center}
$\displaystyle \checkmark $
\end{center}\vspace{-4mm}
 & \vspace{-4mm}\begin{center}
$\displaystyle \checkmark $
\end{center}\vspace{-4mm}
 & \vspace{-4mm}\begin{center}
$\displaystyle \checkmark $
\end{center}\vspace{-4mm}
 & \vspace{-4mm}\begin{center}
$\displaystyle \checkmark $
\end{center}\vspace{-4mm}
 & \vspace{-4mm}\begin{center}
$\displaystyle \checkmark $
\end{center}\vspace{-4mm}
 & \vspace{-4mm}\begin{center}
 \xmark 
\end{center}\vspace{-4mm}
 & \vspace{-4mm}\begin{center}
 \xmark 
\end{center}\vspace{-4mm}
   \\  
 \hline
\end{longtable}
\caption{Chase variants with respect to heredity / preservation of ancestry. 
}       
\end{table}

\smallskip
To decide if a ruleset $\mathcal R$ is $k$-bounded, 
we consider the dual problem of determining if there 
\corvtwo{is} a factbase and a derivation of depth $k{+}1$ constituting a counterexample to the property.
Our method relies on the construction of a  special
factbase that allows us to reproduce such \corvtwo{a} derivation when the chase enjoys either preservation of ancestry or heredity. We show that the size of such a factbase depends on $k$ and $\mathcal R$ only. Therefore, by testing the chase depth over a representative set of \corvtwo{bounded-size} factbases, we obtain decidability of the problem.

\subsection{Preservation of Ancestry}

\medskip
Preservation of ancestry is a notion built on the ancestors set of an atom produced by a derivation.

\begin{definition}[Chase Graph and Ancestors]\label{chsgrph}
Let $\mathcal{D}$ be a derivation from $(F,\mathcal{R})$. 
The \emph{chase graph} of $\mathcal D$ is a (possibly infinite) directed graph $G=(V,E)$, 
where the nodes are the atoms  in $F^{\mathcal D}$, 
and there is an edge from $A'$ to $A$ labeled with $\t$
whenever $A$ is produced by $\t\in \trg(\D)$ and $A'\in\spp(\t)$.
Moreover, we say that an atom $A'$ is an \emph{ancestor} of $A$ if there is a non-empty path from $A'$ to $A$ in the chase graph. 
The set of ancestors of an atom $A$ is denoted by $\textit{Anc}\dD(A)$, while $\textit{Anc}^k_{\mathcal D}(A)$ is the subset of ancestors whose rank is exactly $k$. If $A \in F$, then $\textit{Anc}\dD(A) = \emptyset$. Both notations are extended to sets of atoms.  
\end{definition}
Note that an edge is labeled by the \emph{first} trigger of the derivation that produces the atom.
Also, the {rank} of an atom is equal to the maximum length of a path to this atom in the chase graph, and the {depth} of the derivation is equal to the maximum length of a path in the chase graph.
Note that $Anc_{\mathcal D}^0(A)$ are the ancestors of $A$ that belong to the initial factbase, \corvtwo{which} we call \emph{prime ancestors}. 
Importantly, the set of prime ancestors of an atom is finite  and its size depends on the rank of the atom itself, as well as on the maximum size (that is, the number of atoms) of the body of a rule, hence it can be bounded.

\begin{proposition}
\label{anclue}
Let $\mathcal{D}$ be a derivation from $(F,\mathcal R)$ and $A\in F^{\mathcal D}$ an atom of rank $k > 0 $. 
Then $|Anc_{\mathcal{D}}^0(A) |\leq b^k$, where $b=max\big\{\ |body(R)| : R\in\mathcal R\big\}$.
\end{proposition}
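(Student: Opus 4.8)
The plan is to bound the number of prime ancestors by induction on the rank $k$ of the atom $A$, using the recursive structure of the chase graph. The key observation is that the prime ancestors of $A$ are reached by following paths backward through the chase graph until hitting the initial factbase, and that the branching at each step is controlled by the body size of the rule applied.

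First I would set up the induction on $k = rank_{\mathcal D}(A)$. For the base case $k=1$, the atom $A$ is produced by a trigger $\t$ whose support $\spp(\t) = \pi(body(R))$ consists entirely of atoms of rank $0$, i.e.\ atoms of $F$. Since $\spp(\t)$ has at most $|body(R)| \leq b$ atoms, and these are exactly the prime ancestors of $A$, we get $|Anc_{\mathcal D}^0(A)| \leq b = b^1$. For the inductive step, suppose $A$ has rank $k > 1$ and is produced by the trigger $\t = (R,\pi)$. Then every prime ancestor of $A$ is a prime ancestor of some atom in $\spp(\t)$, so $Anc_{\mathcal D}^0(A) \subseteq \bigcup_{A' \in \spp(\t)} Anc_{\mathcal D}^0(A')$. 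Each atom $A' \in \spp(\t)$ has rank strictly less than $k$ (by the definition of rank, every support atom has rank at most $k-1$), so by the inductive hypothesis $|Anc_{\mathcal D}^0(A')| \leq b^{k-1}$. Since $|\spp(\t)| \leq b$, a union bound yields $|Anc_{\mathcal D}^0(A)| \leq b \cdot b^{k-1} = b^k$.

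The one point requiring a little care is the precise handling of atoms in $\spp(\t)$ that themselves lie in $F$ (rank $0$): such an atom $A'$ is its own prime ancestor and contributes a single atom to the bound, which is consistent with treating $b^0 = 1$ as the bound for rank-$0$ atoms. To keep the induction uniform I would phrase the inductive hypothesis as the statement for all atoms of rank at most $k-1$, with the convention that a rank-$0$ atom $A'$ satisfies $|Anc_{\mathcal D}^0(A') \cup \{A'\}| = 1 \leq b^0$, since it has no ancestors but is itself prime. The union $Anc_{\mathcal D}^0(A) \subseteq \bigcup_{A' \in \spp(\t)} \big( Anc_{\mathcal D}^0(A') \cup (\{A'\} \text{ if } A' \in F) \big)$ then accounts for support atoms of every rank uniformly.

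I do not expect any serious obstacle here: the result is a straightforward counting argument exploiting that the in-degree of each produced atom in the chase graph is bounded by $b$. The only subtlety worth stating explicitly is that an atom may be produced by more than one trigger in the derivation, but by Definition~\ref{chsgrph} the edge into $A$ is labeled by the \emph{first} trigger that produces it, so there is a well-defined single trigger $\t$ (and hence a single support set) governing the backward step, which makes the recursion on $\spp(\t)$ unambiguous.
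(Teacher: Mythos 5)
Your proof is correct and follows essentially the same route as the paper: an induction on the rank $k$, decomposing $Anc_{\mathcal D}^0(A)$ through the support of the (first) trigger producing $A$ and bounding the union by $b \cdot b^{k-1}$. If anything, your treatment of support atoms that themselves lie in $F$ is slightly more careful than the paper's one-line argument.
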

\begin{proof}
Let $(R, \pi)$ be the trigger that produces $A$. Then $Anc_{\mathcal{D}}^0(A)$ is equal to the union of $\{\pi(B_i)\} \cup Anc_{\mathcal{D}}^0(\pi(B_i))$ for all atoms $B_i \in body(R)$. 
The proof follows by a simple induction on $k$.  
%
%
\end{proof}

%
We denote by $\D_{\mid G}$ the \emph{restriction of a derivation} $\D$ from $(F,\mathcal R)$ with respect to $G\subseteq F$, which is the maximal derivation 
from $(G,\mathcal R)$
that only uses  the triggers of $\trg(\D)$, in the given order.
As stated below, it can be easily verified that prime ancestors are enough to infer a certain atom
as well as all its ancestors.

\medskip
\begin{proposition}
\label{obliviancest}
Let $\mathcal{D}$ be a derivation from $(F,\mathcal R)$ and $A\in F^{\mathcal D}\setminus F$. 
Let $\mathcal D' = \D_{\mid Anc_{\mathcal D}^0(A)}$. Then
\mbox{$Anc_{\mathcal D}(A) \cup \{A\} \subseteq F^{\mathcal D'}$. }
\end{proposition}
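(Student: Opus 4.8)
The plan is to prove the slightly stronger statement that every trigger of $\mathcal D$ producing an atom of $S := Anc_{\mathcal D}(A)\cup\{A\}$ is actually applied during the replay $\mathcal D' = \D_{\mid Anc_{\mathcal D}^0(A)}$. The desired inclusion $S\subseteq F^{\mathcal D'}$ then follows at once: a rank-$0$ atom of $S$ sits in the initial factbase $Anc_{\mathcal D}^0(A)$ of $\mathcal D'$, and any higher-rank atom of $S$ lies in the output of its producing trigger, which we will have shown to fire in $\mathcal D'$.

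First I would record two structural facts about $S$. \emph{(i) Ancestor-closure of supports:} if $\t$ produces an atom $B\in S$ and $C\in\spp(\t)$, then there is an edge from $C$ to $B$ in the chase graph, so $C$ is an ancestor of $B$; since $B$ is either $A$ itself or an ancestor of $A$, concatenating this edge with a path $B\to A$ yields a path $C\to A$, whence $C\in Anc_{\mathcal D}(A)\subseteq S$. Thus the support of any trigger producing an atom of $S$ is entirely contained in $S$. \emph{(ii) Prime ancestors as the base:} every rank-$0$ atom of $S$ lies in $F$ and, being an ancestor of $A$, belongs to $Anc_{\mathcal D}^0(A)$, i.e.\ to the initial factbase $G$ of $\mathcal D'$.

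The core is then an induction on the position $i$ of a trigger $\t_i\in\trg(\mathcal D)$ producing an atom of $S$. Recall that $\mathcal D'$ scans $\trg(\mathcal D)$ in the original order and applies each trigger whose support is present when it is reached, so it suffices to show $\spp(\t_i)\subseteq F^{\mathcal D'}$ at that moment. By fact~(i) every $C\in\spp(\t_i)$ lies in $S$, and since $\spp(\t_i)\subseteq F^{\mathcal D_{\mid i-1}}$ in $\mathcal D$, each such $C$ is either of rank $0$ — hence in $G$ by fact~(ii) — or is produced in $\mathcal D$ by some $\t_j$ with $j<i$. In the latter case $\t_j$ produces an atom of $S$, so the induction hypothesis yields that $\t_j$ is applied in $\mathcal D'$, giving $C\in\op(\t_j)\subseteq F^{\mathcal D'}$ already before position $i$. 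Hence the support of $\t_i$ is present when $\mathcal D'$ reaches it, so $\t_i$ is applied and $\op(\t_i)\subseteq F^{\mathcal D'}$; in particular the atom of $S$ it produces belongs to $F^{\mathcal D'}$.

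The main subtlety to watch is the mismatch between $\mathcal D$ and $\mathcal D'$: an atom may be produced by a different trigger and at a different rank in the restricted replay, and the restriction may skip some triggers of $\mathcal D$ altogether. The induction-on-position formulation sidesteps this, since it only ever requires the set-membership $C\in F^{\mathcal D'}$ \emph{before a fixed position} rather than a faithful reproduction of ranks; this is precisely why I would reason about positions in $\trg(\mathcal D)$ instead of ranks. Finally, the fixed naming of fresh variables ensures that $\op(\t_i)$ denotes the very same atoms in $\mathcal D$ and in $\mathcal D'$, so no renaming bookkeeping is needed to transfer the produced atom across the two derivations.
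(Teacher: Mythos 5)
Your proof is correct and takes essentially the same route as the paper's: both hinge on the observation that the support of any trigger producing an ancestor of $A$ consists again of ancestors of $A$, and then show by induction that all such triggers are replayed in $\mathcal D' = \mathcal D_{\mid \textit{Anc}^0_{\mathcal D}(A)}$. The only (cosmetic, and arguably slightly cleaner) difference is that you induct on the position of the producing trigger in $\trg(\mathcal D)$ with $\mathcal D'$ held fixed, whereas the paper inducts on the rank of $A$ and implicitly applies its inductive hypothesis to the restrictions induced by the smaller prime-ancestor sets of the support atoms.
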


\medskip
\begin{proof}
By induction on the rank of $A$ in $\D$. 
Let $\t \in\trg(\D)$ be the trigger that produces $A$. \corvtwo {If $rank\dD(A)=1$, then 
 $Anc_{\mathcal D}^0(A)=\spp(\t)$}, hence $\t \in\trg(\D')$ and the claim follows. Assume \corvtwo{now} that $rank\dD(A)=n+1$.
Then
\mbox{$Anc_{\mathcal D}(A)=Anc_{\mathcal D}(\spp(\t))\cup \spp(\t)$} and $Anc^0_{\mathcal D}(A)=Anc^0_{\mathcal D}(\spp(\t))$.
Since  $\spp(\t)$ contains only atoms of rank up to $n$, by the inductive hypothesis, $Anc_{\mathcal D}(\spp(\t))\cup \spp(\t) \subseteq F^{\mathcal D'}$,
hence $\t \in\trg(\D')$ and $A \in F^{\mathcal D'}$. 
\end{proof}

\medskip
Two issues have to be addressed at this point if one wants to exploit the restriction of a derivation to decide $k$-boundedness. The first is that 
 $\D$ and $\D_{\mid Anc_{\mathcal D}^0(A)}$ could belong to different chase variants.
 The second is that $\D$ and $\D_{\mid Anc_{\mathcal D}^0(A)}$ could disagree on the rank of atoms.
Both properties are not immediate, and this is especially true when $\D$ is breadth-first as the restriction of a derivation could, \corvtwo{on the one hand}, break rank exhaustiveness 
 and, \corvtwo{on the other hand}, increase the rank of some atoms. 


The following examples illustrate that for the breadth-first semi-oblivious and breadth-first restricted chase the restriction of a derivation can lead to a different chase variant.
The examples show that there may be triggers that were not applicable starting from $F$ but become applicable by starting from the prime ancestors of an atom.

\medskip
\begin{example} [Breadth-first  semi-oblivious chase] \label{bf-so-ancestry-ex}
Let $F = \{p(a,b), r(a,c)\}$ and $\mathcal R = \{R_1= p(x,y) \rightarrow r(x,y),$ \mbox{$R_2= r(x,y) \rightarrow \exists z.~ q(x,z),$} \mbox{$R_3=r(x,y) \rightarrow t(y)\}$.} Let $\mathcal{D}$ be the (terminating)
$\bfSO$-chase 
derivation of depth $2$ from $F$ whose sequence of associated triggers is $(R_1, \pi_1), $ $(R_3,\pi_2), $ $(R_2,\pi_2),(R_3,\pi_1)$ with \mbox{$\pi_1=\{x\mapsto a, y\mapsto b\}$} and \mbox{$\pi_2=\{x\mapsto a, y\mapsto c\}$}, which produces $r(a,b),  t(c), q(a,z_{(R_2,\pi_2)}), t(b)$; the trigger $(R_2,\pi_1)$ is then \OB-applicable but not \mbox{\SO-applicable,} as it maps 
\corvtwo{the frontier variable $x$ to $a$, like the trigger $(R_2, \pi_2)$}. Let  $F' =\{ p(a,b)\}$. The restriction of $\mathcal{D}$ induced by $F'$ includes only $(R_1,\pi_1), (R_3,\pi_1)$ and is a \SO-chase derivation of depth $2$, however,  it is not breadth-first since now $(R_2,\pi_1)$ is \SO-applicable at rank $2$ (thus the breadth-first condition is not satisfied). 
\end{example}

\begin{example} [Breadth-first   restricted chase] \label{bf-r-ancestry-ex}

\noindent
Let $F = \{p(a,b), q(a,c)\}$ and $\mathcal R = \{R_1=p(x,y) \rightarrow r(x,y),$ \mbox{$R_2=r(x,y) \rightarrow \exists z.~ q(x,z),$} \mbox{$R_3=r(x,y) \rightarrow t(x)\}$.} Let $\mathcal{D}$ be the (terminating) breadth-first derivation of depth $2$ from $F$ whose sequence of associated triggers is $(R_1,\pi), (R_3,\pi)$ with \mbox{$\pi=\{x\mapsto a, y\mapsto b\}$}, which produces 
$r(a,b), t(a)$;
note that the trigger $(R_2,\pi)$ is \SO-applicable but not \RE-applicable because of the presence of $q(a,c)$ in $F$. Let  $F' = \{p(a,b)\}$. The restriction of $\mathcal{D}$ induced by $F'$ is a restricted chase derivation of depth $2$, however, it is not breadth-first since now $(R_2,\pi)$ is \mbox{\RE-applicable} at rank $2$ and thus has to be applied (for the derivation to be breadth-first). 
\end{example}

This motivates the definition of  preservation of ancestry. An X-chase preserves ancestry if\corvtwo{,} for any atom $A$ in an X-derivation, there is an X-derivation that starts from the prime ancestors of $A$ and is able to produce $A$ at the same rank. 

\begin{definition}[Preservation of Ancestry]
The X-chase \emph{preserves ancestry} if, for every X-derivation $\mathcal{D}$ from $(F,\mathcal R)$ and 
 atom \mbox{$A\in F\DD \setminus F$,}
  there is an X-derivation $\mathcal{D}'$ from $(Anc^0_{\mathcal{D}}(A),\mathcal{R})$ such that $A\in F^{\mathcal D'}$ and \mbox{$rank\dD(A)=rank_{\mathcal D'}(A)$.}
\end{definition}
 


We will show that $k$-boundedness is decidable for any chase variant that satisfies this property. 
\correction{We achieve this by limiting the size of the factbases we need to consider. This implies that a finite number of factbases suffice in order to test $k$-boundedness of a given ruleset. To formally establish this implication, we introduce the following notion\corvtwo{.} Let $F$ and $F'$ be atomsets and \mbox{$\tau:\cnst(F)\rightarrow\cnst(F')$,} \mbox{$\sigma:\var(F)\rightarrow\var(F')$} be mappings of constants and variables respectively. If $h=\tau\cup\sigma$ 
is a bijection \corvtwo{such} that $h(F)= F'$\corvtwo{,} then $h$ is a \emph{\mbox{quasi-isomorphism}} from $F$ to $F'$. Two knowledge bases with the same ruleset and quasi-isomorphic factbases behave equivalently with respect to any chase variant.}

\begin{theorem}\label{presantheorem}{Determining if a set of rules is X-$k$-bounded is decidable if the X-chase preserves ancestry. }
\end{theorem}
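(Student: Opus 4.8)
The plan is to decide the complementary (counterexample) problem: a ruleset $\mathcal R$ fails to be X-$k$-bounded precisely when there is a factbase $F$ admitting an X-derivation of depth at least $k+1$, equivalently when some atom $A$ is produced at rank $k+1$ in an X-derivation $\mathcal D$ from some $(F,\mathcal R)$. (Recall from the remark following Definition~\ref{kbaou} that it is immaterial whether we quantify over all derivations or only fair ones.) The goal is to show that whenever such a counterexample exists it can be reproduced from a factbase whose size is bounded by a function of $k$ and $\mathcal R$ alone; since up to quasi-isomorphism there are only finitely many factbases of bounded size over the predicates of $\mathcal R$, this reduces the question to an effective test over a finite set of candidate factbases.

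The localization step is exactly where the hypothesis is used. Suppose $A$ has rank $k+1$ in $\mathcal D$ from $(F,\mathcal R)$. By preservation of ancestry there is an X-derivation $\mathcal D'$ from $(Anc^0_{\mathcal D}(A),\mathcal R)$ that still produces $A$ at rank $k+1$; hence $\mathcal D'$ has depth at least $k+1$ and belongs to the same variant X. By Proposition~\ref{anclue} the witness factbase $Anc^0_{\mathcal D}(A)$ has at most $b^{k+1}$ atoms, where $b=\max\{|body(R)| : R\in\mathcal R\}$, hence at most $a\cdot b^{k+1}$ distinct terms for $a$ the maximal arity; moreover each such atom, being a prime ancestor, occurs in the support of some trigger and so carries a predicate from a rule body, confining the vocabulary to $\mathcal R$. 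I would then invoke invariance under quasi-isomorphism: since the chase behaviour is unchanged by a bijective renaming of constants and variables, it suffices to keep one representative per quasi-isomorphism class of factbases with at most $b^{k+1}$ atoms over the predicates of $\mathcal R$, and there are finitely many such classes.

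It remains to decide, for each fixed finite representative $F_0$, whether $(F_0,\mathcal R)$ admits an X-derivation of depth at least $k+1$. The key observation is that only a finite, explicitly computable set of atoms is relevant: truncating any witnessing derivation right after its first atom of rank $k+1$, every atom produced so far has rank at most $k+1$, and by Proposition~\ref{ranklowerbound} its minimal breadth-first oblivious rank is also at most $k+1$, so it appears among the atoms generated in the first $k+1$ breadth-first oblivious steps from $F_0$, a finite set $S$. Consequently only finitely many triggers (those with support in $S$) can occur; since a derivation uses each trigger at most once (Definition~\ref{deriv}), the space of relevant derivations is finite and can be searched exhaustively, checking X-applicability and the ranks attained. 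The overall algorithm therefore enumerates the finitely many representative factbases, runs this finite search on each, and reports that $\mathcal R$ is not X-$k$-bounded iff some factbase yields a derivation of depth at least $k+1$.

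The main obstacle is the localization step, which is precisely why preservation of ancestry is isolated as a hypothesis. Restricting a derivation to the prime ancestors of $A$ does reproduce $A$ (Proposition~\ref{obliviancest}), but in general the restriction may fall out of the variant X or generate $A$ at a strictly smaller rank, as Examples~\ref{bf-so-ancestry-ex} and \ref{bf-r-ancestry-ex} illustrate for the breadth-first semi-oblivious and restricted chases. Preservation of ancestry is exactly the guarantee that repairs this, ensuring the bounded-size witness stays within X and keeps $A$ at rank $k+1$; once this is secured, the size bound of Proposition~\ref{anclue} together with quasi-isomorphism invariance turns the remaining work into a finite enumeration.
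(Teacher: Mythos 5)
Your proposal is correct and follows essentially the same route as the paper's proof: use preservation of ancestry to relocate a depth-$(k{+}1)$ witness onto the prime-ancestor factbase, bound its size by $b^{k+1}$ via Proposition~\ref{anclue}, and reduce to a finite enumeration of factbases up to quasi-isomorphism. Your final step is in fact slightly more explicit than the paper's (which simply asserts that each bounded factbase admits finitely many relevant X-derivations), since you justify the finiteness of the search space by truncating at the first rank-$(k{+}1)$ atom and invoking Proposition~\ref{ranklowerbound} to confine all relevant atoms, hence triggers, to a finite set.
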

\begin{proof} Let X be a chase variant that preserves ancestry. Let $\mathcal R$ be a ruleset. Suppose that $\mathcal R$ is not X-$k$-bounded. Therefore, there is a factbase $F$ and a derivation $\D$ from $(F,\mathcal R)$ with depth strictly greater than $k$. So there exists an atom $A\in F\DD$ with $rank(A)=k+1$. Because the X-chase preserves ancestry, there exists an X-derivation $\D'$ from $(Anc^0_{\mathcal D}(A),\mathcal R)$ which produces $A$ with the same rank as $\D$. Therefore, $\D'$ is also of depth more than $k$. Let $b$ be the maximum number of atoms in the bodies of the rules of $\mathcal R$. By Proposition~\ref{anclue}, $Anc^0_{\mathcal D}$ has at most $b^{k+1}$ atoms. We have shown that if a ruleset $\mathcal R$ is not X-$k$-bounded, then there exists a factbase $F'$ of at most $b^{k+1}$ (where $b$ depends on $\mathcal R$) such that there is an X-derivation from $(F',\mathcal R)$ of depth strictly greater than $k$. The converse of this statement is trivially true. In conclusion, if X is a chase variant that preserves ancestry, then a ruleset $\mathcal R$ (with $b$ maximum body size) is X-$k$-bounded if and only if for every factbase $F'$ of size at most $b^{k+1}$, every \mbox{X-derivation} from $(F',\mathcal R)$ is of depth at most $k$. Up to quasi-isomorphism, there is a finite number of factbases \corvtwo{of cardinality less or equal to} $b^{k+1}$ and for a given factbase $F$, there is a finite number of X-derivations from $(F, \mathcal R)$. Hence we can indeed compute all these derivations and verify whether $\mathcal R$ is X-$k$-bounded or not.
\end{proof}

%
%
%
%
%
%
%
%
%
%
%

\medskip
We now show that the breadth-first variants of the semi-oblivious and restricted chases preserve ancestry. 
\correction{As illustrated by Examples \ref{bf-so-ancestry-ex} and \ref{bf-r-ancestry-ex}, the restriction of a derivation to the prime ancestors of an atom $A$ 
may not \corvtwo{satisfy} the conditions required by ancestry preservation, although it preserves the rank of $A$, because it may not be a breadth-first derivation. However, we will see that if we take only the subsequence of triggers that produce ancestors of a given atom $A$, 
\corvtwo{then} we can complete it by missing triggers to obtain a breadth-first derivation which preserves the rank of $A$. 
For other chase variants that preserve ancestry as well, we will prove that they satisfy a stronger property, namely heredity.}

First, notice that, when reducing the factbase, the application of the same triggers does not necessarily preserve their ranks and hence the ranks of the atoms they produce. Here is a simple example:
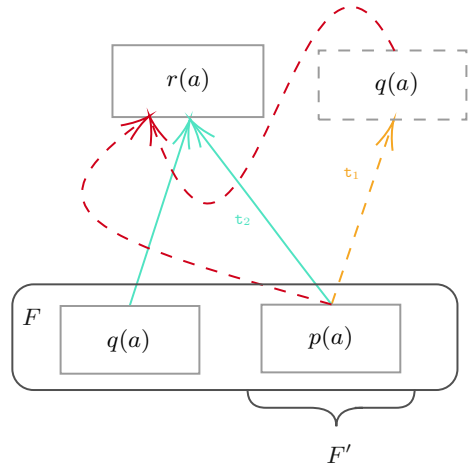
\begin{figure}[b!]
\centering

\tikzset{every picture/.style={line width=0.75pt}} 

\begin{tikzpicture}[x=1pt,y=1pt,yscale=-1,xscale=1]
\clip(85,160) rectangle (300,337);
\draw [color={rgb, 255:red, 80; green, 227; blue, 194 }  ,draw opacity=1 ]   (155,274) -- (177.22,205.22) ;
\draw [shift={(177.83,203.32)}, rotate = 467.9] [color={rgb, 255:red, 80; green, 227; blue, 194 }  ,draw opacity=1 ][line width=0.75]    (10.93,-3.29) .. controls (6.95,-1.4) and (3.31,-0.3) .. (0,0) .. controls (3.31,0.3) and (6.95,1.4) .. (10.93,3.29)   ;

\draw [color={rgb, 255:red, 245; green, 166; blue, 35 }  ,draw opacity=1 ] [dash pattern={on 4.5pt off 4.5pt}]  (231.38,273.43) -- (254.19,205.71) ;
\draw [shift={(254.83,203.82)}, rotate = 468.62] [color={rgb, 255:red, 245; green, 166; blue, 35 }  ,draw opacity=1 ][line width=0.75]    (10.93,-3.29) .. controls (6.95,-1.4) and (3.31,-0.3) .. (0,0) .. controls (3.31,0.3) and (6.95,1.4) .. (10.93,3.29)   ;

\draw  [color={rgb, 255:red, 155; green, 155; blue, 155 }  ,draw opacity=1 ] (128.88,274) -- (181.38,274) -- (181.38,299.63) -- (128.88,299.63) -- cycle ;
\draw  [color={rgb, 255:red, 155; green, 155; blue, 155 }  ,draw opacity=1 ] (204.88,273.5) -- (257.38,273.5) -- (257.38,299.13) -- (204.88,299.13) -- cycle ;
\draw  [color={rgb, 255:red, 155; green, 155; blue, 155 }  ,draw opacity=1 ] (148.5,175.28) -- (205.6,175.28) -- (205.6,202.48) -- (148.5,202.48) -- cycle ;
\draw  [color={rgb, 255:red, 155; green, 155; blue, 155 }  ,draw opacity=1 ][dash pattern={on 4.5pt off 4.5pt}] (226.5,177.28) -- (282.38,177.28) -- (282.38,203.43) -- (226.5,203.43) -- cycle ;
\draw [color={rgb, 255:red, 80; green, 227; blue, 194 }  ,draw opacity=1 ]   (231.38,273.43) .. controls (211.4,247.79) and (215.91,253.02) .. (178.97,204.79) ;
\draw [shift={(177.83,203.32)}, rotate = 412.53999999999996] [color={rgb, 255:red, 80; green, 227; blue, 194 }  ,draw opacity=1 ][line width=0.75]    (10.93,-3.29) .. controls (6.95,-1.4) and (3.31,-0.3) .. (0,0) .. controls (3.31,0.3) and (6.95,1.4) .. (10.93,3.29)   ;

\draw [color={rgb, 255:red, 208; green, 2; blue, 27 }  ,draw opacity=1 ] [dash pattern={on 4.5pt off 4.5pt}]  (255.17,177.41) .. controls (218.02,102.45) and (200.01,311.3) .. (163.06,203.72) ;
\draw [shift={(162.5,202.08)}, rotate = 431.46000000000004] [color={rgb, 255:red, 208; green, 2; blue, 27 }  ,draw opacity=1 ][line width=0.75]    (10.93,-3.29) .. controls (6.95,-1.4) and (3.31,-0.3) .. (0,0) .. controls (3.31,0.3) and (6.95,1.4) .. (10.93,3.29)   ;

\draw [color={rgb, 255:red, 208; green, 2; blue, 27 }  ,draw opacity=1 ] [dash pattern={on 4.5pt off 4.5pt}]  (231.38,273.43) .. controls (144.05,248.99) and (109.85,239.6) .. (160.93,203.19) ;
\draw [shift={(162.5,202.08)}, rotate = 505.01] [color={rgb, 255:red, 208; green, 2; blue, 27 }  ,draw opacity=1 ][line width=0.75]    (10.93,-3.29) .. controls (6.95,-1.4) and (3.31,-0.3) .. (0,0) .. controls (3.31,0.3) and (6.95,1.4) .. (10.93,3.29)   ;

\draw  [color={rgb, 255:red, 80; green, 80; blue, 80 }  ,draw opacity=1 ] (199.6,305.53) .. controls (199.6,310.2) and (201.93,312.53) .. (206.6,312.53) -- (223.19,312.53) .. controls (229.86,312.53) and (233.19,314.86) .. (233.19,319.53) .. controls (233.19,314.86) and (236.52,312.53) .. (243.19,312.53)(240.19,312.53) -- (255.8,312.53) .. controls (260.47,312.53) and (262.8,310.2) .. (262.8,305.53) ;
\draw  [color={rgb, 255:red, 80; green, 80; blue, 80 }  ,draw opacity=1 ] (110.63,273.81) .. controls (110.63,269.39) and (114.21,265.81) .. (118.63,265.81) -- (271,265.81) .. controls (275.42,265.81) and (279,269.39) .. (279,273.81) -- (279,297.81) .. controls (279,302.23) and (275.42,305.81) .. (271,305.81) -- (118.63,305.81) .. controls (114.21,305.81) and (110.63,302.23) .. (110.63,297.81) -- cycle ;

\draw (234.37,329.97) node [color={rgb, 255:red, 19; green, 20; blue, 19 }  ,opacity=1 ]  {$F'$};
\draw (244.5,151.4) node [scale=0.7,color={rgb, 255:red, 208; green, 2; blue, 27 }  ,opacity=1 ]  {$\mathtt{t}_{2}$};
\draw (231.13,286.31) node   {$p( a)$};
\draw (154.67,287.33) node   {$q( a)$};
\draw (256.06,189.95) node   {$q( a)$};
\draw (177.17,188.5) node   {$r( a)$};
\draw (239.17,223.74) node [scale=0.7,color={rgb, 255:red, 245; green, 166; blue, 35 }  ,opacity=1 ]  {${\textstyle \mathtt{t}_{1}}$};
\draw (197.9,241.38) node [scale=0.7,color={rgb, 255:red, 80; green, 227; blue, 194 }  ,opacity=1 ]  {$\mathtt{t}_{2}$};
\draw (117.93,278.27) node [color={rgb, 255:red, 20; green, 20; blue, 20 }  ,opacity=1 ]  {$F$};

\end{tikzpicture}

\caption{Chase graph(s) associated with the derivations $\D$ and $\D'$ of Example~\ref{exampledifferentranks} (dashed elements do not appear in the chase graph of $\D$).}\label{figurexdifr}
\end{figure}
\noindent
\begin{example}\label{exampledifferentranks} Let $\mathcal R$ contain the rules $R_1: p(x)\rightarrow q(x)$ and $R_2: p(x)\wedge q(x)\rightarrow r(x)$. Let $F=\{p(a),q(a)\}$ and we denote $\t_1=(R_1,\{x\mapsto a\})$ and $\t_2=(R_2,\{x\mapsto a\})$. Here is a derivation $\D$ from $(F,\mathcal R)$:
\begin{center}
$(\emptyset,F),(\t_1,F),(\t_2,F\cup\{r(a)\})$
\end{center}
Now\corvtwo{, for $F' = \{p(a)\}$,} there is a derivation $\D'$ from $(F',\mathcal R)$ such that \mbox{$\trg(\D')=\trg(\D)$ :}
\begin{center}
$(\emptyset,F'),(\t_1,F),(\t_2,F\cup\{r(a)\})$
\end{center}
Notice that the trigger $\t_2$ and correspondingly the atom $r(a)$ that it produces, have different ranks in the two derivations. In Figure~\ref{figurexdifr} we have the corresponding chase graph(s).
\end{example}

\bigskip
{However, it follows from Proposition \ref{ranklowerbound} that
 by taking the restriction of a \bfX-derivation the ranks of atoms can only increase. Indeed, the restriction of a \bfX-derivation from $(F,\mathcal R)$ is always an $\OB$-derivation from $(F,\mathcal R)$. }
We conclude that  although the ranks can be different, those obtained by the larger factbase provide a lower \corvtwo{bound}.

The following proposition applies to any breadth-first oblivious, semi-oblivious or restricted derivation: it shows that when we restrict the factbase to the prime ancestors of any atom $A$ produced in a derivation $\D$, the \bfO, \bfSO~and \bfR ~ranks of all the ancestors of $A$ in the smaller factbase are equal to their ranks in $\D$.

\begin{proposition}[Preservation of Ranks of Ancestors]\label{rAmks}
Let \D be a \bfX-derivation from $(F,\mathcal R)$, where $\text X\in\{\OB,\SO, \RE\}$. Let $A\in F\DD$ be an atom and $\D'$ any \bfO-derivation from $(Anc^0_{\mathcal D}(A),\mathcal R)$ of the same depth as $\D$.
Then\corvtwo{, \mbox{$rank\dD(A')=rank_{\mathcal D'}(A')$} for every \mbox{$A'\in Anc\dD(A)$}.}
\end{proposition}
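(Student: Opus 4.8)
The plan is to establish the two inequalities $rank_{\mathcal D'}(A')\le rank\dD(A')$ and $rank_{\mathcal D'}(A')\ge rank\dD(A')$ separately, writing $G=Anc^0_{\mathcal D}(A)$ for the finite set of prime ancestors, so that $\D'$ runs on $(G,\mathcal R)$ with $G\subseteq F$. The whole argument rests on one structural observation: if $A'$ is an ancestor of $A$ and $\t$ is the trigger producing $A'$ in $\D$, then every atom of $\spp(\t)$ is again an ancestor of $A'$, hence of $A$. Consequently the rank of every ancestor is computable purely inside the ancestry sub-DAG rooted at $G$, which is exactly what makes an induction on $rank\dD(A')$ well founded.

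For the upper bound I would prove by induction on $k=rank\dD(A')$ that $A'\in F^{\mathcal D'}$ and $rank_{\mathcal D'}(A')\le k$. When $k=0$ the atom $A'$ is a prime ancestor, so $A'\in G$, the initial factbase of $\D'$, and its rank there is $0$. When $k\ge 1$, let $\t=(R,\pi)$ produce $A'$ in $\D$; by the observation above every atom of $\spp(\t)$ is an ancestor of $A$ of rank at most $k-1$, so the induction hypothesis places all of $\spp(\t)$ in $F^{\mathcal D'}$ within the first $k-1$ ranks. Hence $\t$ is applicable on $F^{\mathcal D'}$ with support confined to those ranks. Since $\D'$ is a \bfO-derivation, $\t$ is $\OB$-applicable and remains so until fired, and the breadth-first condition forbids completing rank $k$ while such a trigger is pending; therefore $\t$, or some earlier trigger with the same output, fires by rank $k$, giving $A'\in F^{\mathcal D'}$ and $rank_{\mathcal D'}(A')\le k$.

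For the lower bound the guiding principle is that enlarging the initial factbase can only lower ranks, so a \bfO-derivation from the larger $F$ furnishes a lower bound. Concretely I would: (i) use Proposition~\ref{stable-rank} to pass from the \bfX-derivation $\D$ to a \bfO-derivation $\hat{\mathcal D}$ from $(F,\mathcal R)$ of the same depth, for which $rank_{\hat{\mathcal D}}(A')=rank\dD(A')$; (ii) replay the trigger sequence $\trg(\D')$ from the larger factbase $F$, yielding a derivation $\mathcal D'_F$ from $(F,\mathcal R)$ in which every trigger is still applicable because $G\subseteq F$, and check by a short induction that $rank_{\mathcal D'_F}(B)\le rank_{\mathcal D'}(B)$ for every $B\in F^{\mathcal D'}$, whence $\mathcal D'_F$ has depth no larger than $\D'$; (iii) apply Proposition~\ref{ranklowerbound} to the pair $(\hat{\mathcal D},\mathcal D'_F)$ of derivations from $(F,\mathcal R)$ to obtain $rank_{\hat{\mathcal D}}(A')\le rank_{\mathcal D'_F}(A')$. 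Chaining the three yields $rank\dD(A')=rank_{\hat{\mathcal D}}(A')\le rank_{\mathcal D'_F}(A')\le rank_{\mathcal D'}(A')$, which is the desired inequality.

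The main obstacle is the lower bound, and precisely the interaction between shrinking the factbase to $G$ and the breadth-first discipline. The upper bound is a clean induction once one notices that the supports of ancestor-producing triggers are themselves ancestors; the delicate possibility is that $\D'$ might produce some ancestor through a shortcut at a rank strictly below its rank in $\D$. Ruling this out is exactly the role of the replay in steps (ii)--(iii): any shortcut available from $(G,\mathcal R)$ replays verbatim from the larger $(F,\mathcal R)$ without increasing ranks, which would contradict the minimality of \bfO-ranks given by Proposition~\ref{ranklowerbound}. A final subtlety is that one must reduce the \bfX-case to \bfO via Proposition~\ref{stable-rank} before comparing factbases, since the prime-ancestor set $G$ is defined relative to $\D$ and cannot simply be transported to a different derivation.
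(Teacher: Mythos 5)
Your proof is correct and follows essentially the same route as the paper: the lower bound $rank_{\mathcal D'}(A')\ge rank\dD(A')$ is obtained exactly as in the paper's proof, by replaying $\trg(\D')$ from the larger factbase $F$ and sandwiching via Propositions~\ref{ranklowerbound} and~\ref{stable-rank}. The only (inessential) difference is in the upper bound, where the paper first builds an explicit \OB-derivation $\D_A$ from the subsequence of ancestor-producing triggers of $\D$, proves rank preservation by induction on the number of triggers, and then transfers to $\D'$ via Proposition~\ref{ranklowerbound}, whereas you induct directly on $rank\dD(A')$ and invoke the breadth-first firing discipline of $\D'$; both hinge on the same key observation that the support of an ancestor-producing trigger consists of ancestors.
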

\begin{proof} Let $\D_A$ be the \OB-derivation from $(Anc^0_{\mathcal D}(A),\mathcal R)$ given by 
the maximal subsequence of $\trg(\D)$ made only by 
triggers that produce an ancestor of $A$.
\corvtwo{We first show that $rank\dD(A')=rank_{\mathcal D_A}(A')$, for every $A'\in Anc\dD(A)$.}
We use induction on the number of triggers in $\D_A$.
If $\D_A$ has no trigger\corvtwo{,} then $A\in F$ as the atom has no ancestors, and the claim follows. Now, assume that $\D_A$ has $i$ triggers and let $\t_i$ be its last trigger.
All the atoms that are in $\spp(\t_i)$ have been produced by the first $i-1$ triggers of $\D_A$.
By \corvtwo{the} inductive hypothesis, since such atoms are ancestors of $A$, they
have the same rank in $\D$ and $\D_A$.
Hence, all \corvtwo{ancestors of $A$ in $\op(\t_i)$} have the same rank in $\D$ and $\D_A$.

Now let $\D'$ be any \bfO-derivation from $(Anc^0_{\mathcal D}(A),\mathcal R)$.
This can be seen as an \OB-derivation from $(F,\mathcal R)$. So, by Proposition \ref{ranklowerbound},
the ranks of atoms in $\D'$ can only increase with respect to $\D$.
 But again by Proposition~\ref{ranklowerbound}, the ranks of atoms in $\D_A$ can only increase with respect to $\D'$. Since the ranks of the ancestors of $A$ are the same in $\D_A$ and $\D$, we conclude that so is the case for $\D'$.
%
\end{proof}

\medskip

We are now ready to prove that the breadth-first semi-oblivious and restricted chase variants preserve ancestry.

\begin{theorem}\label{ch1}{The \bfX-chase preserves ancestry when \mbox{$\text{X}\in\{\SO,\RE \}$.}}
\end{theorem}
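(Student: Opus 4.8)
The plan is to show that, for $\text{X}\in\{\SO,\RE\}$, given any \bfX-derivation $\D$ from $(F,\mathcal R)$ and any atom $A\in F\DD\setminus F$, we can construct a \bfX-derivation $\D'$ from $(Anc^0_{\mathcal D}(A),\mathcal R)$ that produces $A$ at the same rank as $\D$. The starting observation, already flagged by Examples~\ref{bf-so-ancestry-ex} and \ref{bf-r-ancestry-ex}, is that the naive candidate $\D_{\mid Anc^0_{\mathcal D}(A)}$ (Proposition~\ref{obliviancest}) does produce $A$ at the correct rank (by Proposition~\ref{rAmks}) but need \emph{not} be breadth-first: restricting the factbase can unlock triggers that were blocked in the full derivation. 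So the real work is to repair breadth-first-ness \emph{without} disturbing the rank at which $A$ is produced.

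The key idea I would pursue is to build $\D'$ in two layers. First I would extract from $\D$ the subsequence $\D_A$ of triggers producing ancestors of $A$ (exactly as in the proof of Proposition~\ref{rAmks}); this is an \OB-derivation from $(Anc^0_{\mathcal D}(A),\mathcal R)$ in which every ancestor of $A$, and $A$ itself, keeps its $\D$-rank. Then I would interleave, rank by rank, the ``missing'' triggers needed to make the derivation breadth-first: at each rank $r$, after placing all the ancestry-producing triggers of rank $r$ coming from $\D_A$, I would saturate the current rank by appending every remaining \bfX-applicable trigger of rank $r$ (in any order respecting rank-compatibility), exactly as the breadth-first condition demands. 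The crucial point to verify is that these extra triggers all have rank strictly greater than $r$ relative to $A$'s ancestry, or at any rate cannot create $A$ earlier: since $Anc^0_{\mathcal D}(A)\subseteq F$ and by Proposition~\ref{rAmks} the ancestors of $A$ sit at the same ranks in the restricted derivation, a trigger producing $A$ must draw its support entirely from atoms of rank $<rank_{\mathcal D}(A)$, and by Proposition~\ref{stable-rank} all \bfX-derivations (for $\text X\in\{\OB,\SO,\RE\}$) agree on the ranks of the atoms they share. Thus no saturating trigger can produce $A$ at a lower rank than in $\D$, so the rank of $A$ is genuinely preserved.

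The main obstacle, and the place where the two variants must be handled with some care, is showing that inserting the missing triggers does not \emph{retroactively} destroy \SO- or \RE-applicability of the ancestry triggers we committed to keeping. For \SO this is benign: \SO-applicability depends only on which frontier-instantiations $\pi_{\mid\fr(R)}$ have already occurred, and adding triggers can only make more instantiations present, so an ancestry trigger that was \SO-applicable in $\D_A$ stays the \emph{first} of its frontier-class provided we insert extras only at ranks $\ge$ its own — hence careful ordering suffices. For \RE the subtlety is sharper, because \RE-applicability is a retraction condition on the whole factbase, and the extra atoms from saturating triggers could in principle create a retraction that would have blocked an ancestry trigger. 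Here I would lean on Propositions~\ref{ranklowerbound} and \ref{babyretractionthe}: the restricted factbase maps by a retraction into any larger breadth-first companion derivation, and the ancestors of $A$ are pinned (they contain only terms already fixed in $Anc^0_{\mathcal D}(A)$), so the argument of Proposition~\ref{stable-rank}---that a retraction fixing all terms of an atom cannot ``absorb'' that atom---shows the ancestry triggers remain \RE-applicable at their intended ranks. Assembling these observations yields a genuine \bfX-derivation $\D'$ producing $A$ at rank $rank_{\mathcal D}(A)$, which is exactly preservation of ancestry.
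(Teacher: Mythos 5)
Your overall strategy is the same as the paper's: restrict to the subsequence of triggers producing ancestors of $A$, then complete rank by rank with the missing X-applicable triggers, using Propositions~\ref{rAmks} and \ref{ranklowerbound} to argue that ranks are preserved. You also correctly identify the crux: the inserted completion triggers might retroactively block the ancestry triggers you need to keep. However, your treatment of the \SO~case has a genuine gap. You declare it ``benign'' and claim that an ancestry trigger ``stays the first of its frontier-class provided we insert extras only at ranks $\ge$ its own --- hence careful ordering suffices.'' But you have no freedom here: the breadth-first condition \emph{forces} you to apply every \SO-applicable trigger of rank $r$ before moving to rank $r+1$. If a completion trigger $\t'$ of rank $r$ is \SO-equivalent to an ancestry trigger $\t$ of rank $r'>r$, you must apply $\t'$ (or an equivalent one) at rank $r$, and then $\t$ is no longer \SO-applicable at rank $r'$. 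Ordering cannot avoid this; what must be shown is that this configuration \emph{cannot occur}. The paper's proof does exactly that: it uses the retraction of Proposition~\ref{retractionthe} to pull $\t'$ back to a trigger $\t''=(R,h\circ\pi')$ that is \SO-equivalent to $\t$ and \SO-applicable on a prefix of the original derivation $\D$ at a rank strictly below $rank_{\mathcal D}(\t)$, contradicting the fact that $\D$ is breadth-first (a trigger of $\t$'s \SO-class would already have been applied in $\D$, so $\t$ could not occur in $\D$ at its rank). Also note the direction of your monotonicity remark is backwards: adding triggers makes \emph{more} frontier-instantiations present, which makes \SO-applicability \emph{harder}, not easier, to preserve.

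The \RE~case is sketched in the right spirit but is imprecise at the decisive step. The claim that the ancestors of $A$ ``contain only terms already fixed in $Anc^0_{\mathcal D}(A)$'' is false in general --- ancestors may contain fresh existential variables; what saves the argument is the fixed naming convention (nulls indexed by the trigger that creates them), which makes the retraction $h$ of Proposition~\ref{retractionthe} act as the identity on the outputs of the ancestry triggers whose supports lie in the rank-$m$ prefix of $\D$. The missing step is then the composition: if a retraction $\sigma$ blocked an ancestry trigger in the completed derivation $\D'$, then $h\circ\sigma$ would be a retraction blocking that same trigger on the corresponding prefix of $\D$, contradicting that $\D$ applied it. Without this composition argument your appeal to ``the argument of Proposition~\ref{stable-rank}'' does not close the case.
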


\begin{proof} We assume that $\mathcal{D}$ is an X-derivation from $(F,\mathcal{R})$ and $\t$ is a trigger  that produces an atom $A$  of rank $n > 0$ in $\D$. 

We start from the derivation $\D_{\mid Anc_{\mathcal D}^0(A)}$  and remove all triggers that do not produce any ancestor of $A$ in $\D$. Let $\D''$ be the obtained derivation.
We build a derivation $\D'$ by applying the following completion procedure on $\D''$.
For each rank $1\leq k\leq n$\corvtwo{,} we first extend $\D'$ by applying all triggers of rank $k$ in $\D''$, in the given order.
Then, within the same rank, we further extend the resulting derivation by non-deterministically applying any sequence of X-applicable triggers of rank $k$ until no trigger of rank $k$ is X-applicable anymore. We repeat this process rank by rank.

\smallskip
We claim that $\D'$ is a \bfX-derivation.
The first thing to show is that $\D'$ is rank-compatible.
To do so, first notice that completion yields a derivation from $(Anc_{\mathcal D}^0(A),\mathcal R)$ which, by Proposition \ref{ranklowerbound}, \shmeiwsh{produces a subset of the atoms produced by a $\bfO$-derivation from $(Anc_{\mathcal D}^0(A),\mathcal R)$ at the same rank.} Then, by Proposition \ref{rAmks}, the ranks of the ancestors of $A$ are the same on $\D$ and the $\bfO$ derivation, thus \mbox{$rank_{\mathcal D}(\t)= rank_{\mathcal D'}(\t)$} for all $\t\in\trg(\D'')$. We conclude that $\D'$ is rank-compatible, as the completion does not alter the ranks of the triggers.
By construction, every rank in $\D'$ cannot be extended with other \mbox{X-applicable} triggers. 
So, \shmeiwsh{if $\D'$ is an X-derivation, then it respects both conditions for being a breadth-first X-derivation. 
What remains to be shown is that all triggers in $\D'$ (and specifically those of $\D''$) are X-applicable on their respective prefix}. 
We perform a case analysis on the type of chase.

\medskip
\noindent (Case $\text{X}$=$\SO$)

We say that $(R,\pi)$ and $(R,\pi')$ are \SO-equivalent if \corvtwo{$\pi(x)=\pi'(x)$ for all $x\in\fr(R)$}.


\smallskip
We start from the breadth\corvtwo{-}first completion $\D'$ that has been described above.
Let $\t$ be the first trigger of $\D'$ at position $ i +1$ in $\trg(\D')$ that is not $\SO$-applicable on $\D'_{\mid i }$. 
Then, there exists a trigger $\t'$ in $\D'_{\mid i}$ which is $\SO$-equivalent with $\t$. \corvtwo{Notice that, by construction of $\D'$, all elements of $\trg(\D')\setminus\trg(\D)$ are \SO-applicable on their respective prefixes, so $\t\in\trg(\D)$.
On the contrary, $\t'$ cannot occur in $\D$ because $\t$ occurs in $\D$ and no pair of distinct triggers in $\D$ can be $\SO$-equivalent.
Therefore, $\t'$ has been introduced by the completion procedure. According to this procedure, all triggers of $\D'$ that precede $\t$ within the same rank as $\t$ also occur in $\D$.
Hence, the rank of $\t'$ in $\D'$ is strictly lower than that of $\t$. Now, because $\D'_{\mid i}$ is a \mbox{\bfSO-derivation,} we can apply Propositions~\ref{rAmks} and~\ref{bfminrank} to conclude that \mbox{$rank_{\mathcal D}(\t)= rank_{\mathcal D'}(\t)$,} so also \mbox{$rank_{\mathcal D'}(\t')<rank_{\mathcal D}(\t)$.}}

At this point we do not know whether $\t'$ was applicable on some prefix of $\D$. This would be the case if $\spp(\t')$ was produced by $\D$. However, $\spp(\t')$ \corvtwo{may} contain atoms that use some fresh terms introduced by a previous trigger added by the completion. 
 \corvtwo{Nevertheless, using Proposition~\ref{retractionthe} we can find a trigger that is \mbox{\SO-equivalent} with $\t'$ (so also with $\t$) and has to be applicable on some prefix of $\D$, leading to a contradiction. 
In particular, this proposition 
implies that there is a retraction $h$ from $F^{\mathcal D'_{\mid i}}\cup F\DD$ to $F\DD$ that maps $\spp(\t')$ to atoms of equal or lower rank in $\D$. Suppose that \mbox{$\t=(R,\pi)$} and $\t'=(R,\pi')$. The \mbox{SO-equivalence} of $t$ and $t'$ guarantees that $\pi$ and $\pi'$ agree on the mapping of all frontier variables of $R$. Those variables are necessarily mapped by $\pi$ (so also by $\pi'$) to terms of $F\DD$.
Let \mbox{$\t''=(R,h\circ\pi')$.} We have that \mbox{$\spp(\t'')=h(\spp(\t'))$} so $\t''$ is applicable on $F^{\mathcal D}$, with \mbox{$rank\dD(\t'')\leq rank_{\mathcal D'}(\t')$} (again, the latter is a consequence of Proposition~\ref{retractionthe}).}

\corvtwo{Since $h$ does not affect any variable of $F\DD$, it does not affect the mapping of the frontier variables of $R$, i.e., for every $x\in\fr(R)$, \mbox{$h\circ\pi'(x)=\pi'(x)$.} This implies that $\t''$ is \mbox{\SO-equivalent} with $\t'$, so also with $\t$. 
Let $\D''$ be the prefix of $\D$ with all elements of rank \shmeiwsh{strictly}  less than $rank_{\mathcal D}(\t)$. $\D''$ does not include $\t$ nor any trigger of the same \SO-equivalence class. 
But $\spp(\t'')=h(\spp(\t'))\subseteq F^{\mathcal D}$ and, from $rank\dD(\t'') < rank\dD(\t)$, we know in particular that $\spp(\t'')\subseteq F^{\mathcal D''}$. So $\t''$ is \SO-applicable on $\D''$. But we know that $\t''\not\in\trg(\D)$ because it is \SO-equivalent with $\t$. This is a contradiction because $\D$ is breadth-first, so $\t''$ must have been applied at its respective rank. We conclude that all the triggers of $\D'$ are \mbox{\SO-applicable} on their respective prefix and $\D'$ is a $\bfSO$-derivation.
}

\medskip
\noindent (Case $\text{X}$=$\RE$)

Let $\t_{ i +1}$ be the first trigger of $\D'$ at position $ i +1$ in $\trg(\D')$ that is not $\RE$-applicable on $\D'_{\mid i }$. 
Assume \corvtwo{that} $\t_{ i +1}$ is of rank $m+1$ in $\D'$
and let ${\mathcal D}_{\mid depth(m)}'$ be the maximal prefix of $\D'$ of depth $m$.
Hence, $F^{{\mathcal D_{\mid  i }'}}=F^{{\mathcal D}_{\mid depth(m)}'}\cup\op(\t_{n+1})\cup\cdots\cup\op(\t_{ i })$  where, by construction of~$\D'$, the (possibly empty) sequence of triggers $\t_{n+1},...,\t_{ i }$  of $\D'$  have the same rank as $\t_{ i +1}$.


\smallskip
Now, let $\D_{\mid depth(m)}$ be the maximal prefix of $\D$ of depth $m$. Of course $\D_{\mid depth(m)}$ is a breadth-first \RE-derivation with the same depth as ${\mathcal D}_{\mid depth(m)}'$. 
Let $\D''$ be the derivation from $(F,\mathcal R)$ with $\trg(\D'')=\trg(\D')$. Then $\D''$ produces the same atoms as $\D'$ and does not have greater depth. 
So, by Proposition~\ref{retractionthe}, there is a retraction $h$ from \mbox{$F^{{\mathcal D}_{\mid depth(m)}''}\cup F^{{\mathcal D}_{\mid depth(m)}}$} to \mbox{$F^{{\mathcal D}_{\mid depth(m)}}$,} so also from \mbox{$F^{{\mathcal D}_{\mid depth(m)}'}\cup F^{{\mathcal D}_{\mid depth(m)}}$} to \mbox{$F^{{\mathcal D}_{\mid depth(m)}}$.} 


\smallskip
Recall that 
$\t_{n+1},...,\t_{ i +1}$
are  the first triggers  at the beginning of rank $m+1$ in $\D'$.
{These triggers are necessarily producing an ancestor of $A$ since it is the case for $\t_{ i +1}$. And since $\D'$ is rank-compatible, they also 
have rank $m+1$ in $\D$. } 
By Proposition \ref{rAmks}, the ranks of the ancestors of $A$ are the same in $\D$ and $\D'$.
So the support of each trigger is in $F^{{\mathcal D}_{\mid depth(m)}}$ and since 
$i)$ $h$ preserves the terms of  $F^{{\mathcal D}_{\mid depth(m)}}$
and
$ii)$ fresh nulls are named after the trigger that generated them,
we deduce that $h$ behaves as the identity on
$\op(\t_{n+1})\cup\cdots\cup\op(\t_{{ i +1}})$.
We get
 $h(F^{{\mathcal D}_{\mid i }'})=h(F^{{\mathcal D}_{\mid depth(m)}'}\cup\op(\t_{n+1})\cup\cdots\cup\op(\t_{ i }))\subseteq  F^{{{\mathcal D}_{\mid depth(m)}}}\cup \op(\t_{n+1})\cup\cdots\cup\op(\t_{{ i }})\subseteq F^{\mathcal D_{\mid j}}$
where $j+1$ is the position of $\t_{ i +1} $ in $\trg(\D)$.

\smallskip

Now, as $\t_{ i +1}$ is not \mbox{$\mathbf{R}$-applicable} on $\D_ i '$ there is a retraction $\sigma$ from 
$F^{{\mathcal D}_{\mid  i+1 }'}
$ to $F^{{\mathcal D}_{\mid  i }'}$.
But this means that $\t_{ i +1}$ was not applicable on the prefix $\D_{\mid j}$  of $\D$
because of the retraction $h\circ\sigma$ from 
$F^{{\mathcal D}_{\mid  j+1 }}
$ to $F^{{\mathcal D}_{\mid  j }}$; 
indeed, $h\circ\sigma(\op(\t_{ i +1}))\subseteq h(F^{{\mathcal D}_{\mid i }'})\subseteq F^{\mathcal D_{\mid j}}$.  \corvtwo{This is} a contradiction.
\end{proof}


\subsection{Heredity}

Heredity is a second property that leads to the decidability of $k$-boundedness.
A chase  variant X is hereditary if by restricting an X-derivation to a subset of a factbase we still obtain an \mbox{X-derivation.}


\begin{definition}[Heredity] The X-chase is \emph{hereditary} if\corvtwo{, for any X-derivation \D from $(F,\mathcal R)$  and subset $G\subseteq F$, the restriction $\D_{\mid G}$ is also an X-derivation.} 
\end{definition}

This property is satisfied by the oblivious, the semi-oblivious and the restricted chase variants.

\begin{theorem}\label{hered}{The $\mathrm{X}$-chase is hereditary for $\text{X}\in\{\mathbf{O},\bfO,\mathbf{SO},\mathbf{R}\}$.}  
\end{theorem}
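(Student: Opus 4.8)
The plan is to prove heredity separately for each of the four chase variants by showing that restricting a derivation $\D$ to a subset $G \subseteq F$ preserves the relevant applicability condition at every step. Throughout I would use the fact that $\D_{\mid G}$ is defined as the maximal derivation from $(G,\mathcal R)$ using exactly the triggers of $\trg(\D)$ that remain applicable, in their original order. The crucial general observation, which I would establish first, is that $F^{\D_{\mid G}} \subseteq F^{\D}$: every atom produced in the restricted derivation is also produced in the original one, since the restricted derivation applies a subsequence of the same triggers starting from a smaller factbase. This monotonicity is what drives all four cases.

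\medskip
\noindent For the oblivious cases $\text{X}\in\{\OB,\bfO\}$, \OB-applicability only forbids reapplying a trigger that already occurs in the derivation. Since $\trg(\D_{\mid G})$ is a subsequence of $\trg(\D)$ and $\D$ itself never repeats a trigger, no repetition can arise in the restriction either; hence every retained trigger is \OB-applicable on its prefix and $\D_{\mid G}$ is an \OB-derivation. For the \bfO\ case I additionally note that the restriction of a breadth-first oblivious derivation is again oblivious (the breadth-first qualifier need not be preserved, and indeed the definition of heredity only asks for an X-derivation, so it suffices to land in the \OB-chase; this matches how \bfO\ appears in the statement). The \SO\ case is analogous but uses \SO-equivalence classes: \SO-applicability of $\t=(R,\pi)$ is blocked only by an earlier trigger $(R,\pi')$ agreeing with $\pi$ on $\fr(R)$. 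Again, because $\trg(\D_{\mid G})$ is a subsequence of $\trg(\D)$ and $\D$ contains no two \SO-equivalent triggers, the restriction inherits this property, so each retained trigger stays \SO-applicable.

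\medskip
\noindent The restricted case $\text{X}=\RE$ is the main obstacle and where I expect the real work to lie. Here \RE-applicability of $\t$ on its prefix means there is no retraction from $F^{\mathcal D'}\cup\op(\t)$ back to $F^{\mathcal D'}$. Unlike the syntactic conditions above, this is a semantic condition that does not obviously transfer when the ambient factbase shrinks: a trigger that was \RE-applicable in the large derivation might, a priori, fail to be \RE-applicable in the restriction, which is exactly what the definition of $\D_{\mid G}$ guards against by simply dropping such triggers. What must be verified is that every trigger \emph{retained} in $\D_{\mid G}$ is genuinely \RE-applicable on its own prefix, i.e.\ that no spurious retraction appears in the smaller derivation. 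The key idea is to argue contrapositively: suppose a retained trigger $\t$ admits a retraction $\sigma$ from $F^{\D_{\mid G}}\cup\op(\t)$ to $F^{\D_{\mid G}}$. Using the inclusion $F^{\D_{\mid G}}\subseteq F^{\D}$, I would try to lift $\sigma$ to a retraction witnessing that $\t$ was \emph{not} \RE-applicable in the original derivation $\D$ either, contradicting the fact that $\t\in\trg(\D)$ was applied there. The delicate point is that a retraction on the smaller factbase fixes the terms of $F^{\D_{\mid G}}$ but those terms form a subset of the terms of $F^{\D}$, so combining $\sigma$ with the identity on the extra atoms of $F^{\D}\setminus F^{\D_{\mid G}}$ requires checking that the composed map is still a retraction (identity on the right-hand side and mapping the whole onto it); this hinges on the fact that fresh nulls are named canonically after their producing trigger, so the outputs of triggers present in both derivations carry identical terms. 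Once this lifting is justified, the contradiction closes the $\RE$ case and completes the theorem.
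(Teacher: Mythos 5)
Your treatment of $\OB$, $\SO$ and $\RE$ is essentially the paper's proof: the first two cases are the same subsequence observations (no repeated triggers, no two $\SO$-equivalent triggers), and for $\RE$ your contrapositive argument --- lift a hypothetical retraction from $F^{\mathcal D_{\mid G}}\cup\op(\t)$ to $F^{\mathcal D_{\mid G}}$ to one on the larger factbase by extending it with the identity on $F^{\mathcal D}\setminus F^{\mathcal D_{\mid G}}$, which is licit because fresh nulls are named canonically after their producing trigger and the frontier terms of $\t$ already lie in the smaller factbase --- is exactly what the paper's one-line justification (``the restricted factbases are included in the original ones'') amounts to when spelled out.

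There is, however, a genuine gap in the $\bfO$ case. You claim that ``the breadth-first qualifier need not be preserved'' and that it suffices for $\D_{\mid G}$ to land in the $\OB$-chase. This misreads the definition of heredity, which requires $\D_{\mid G}$ to be an X-derivation for the \emph{same} variant X; for $\text{X}=\bfO$ that means a breadth-first oblivious derivation, i.e., rank-compatible and with every rank exhausted. The qualifier is not cosmetic: heredity of $\bfO$ is what feeds Theorem \ref{prophereditary} to give preservation of ancestry for the $\bfO$-chase, which explicitly demands a $\bfO$-derivation from the prime ancestors, and Proposition \ref{rAmks} is applied to that restricted derivation under the assumption that it is breadth-first. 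The paper closes this case by arguing, first, that $\D_{\mid G}$ remains rank-compatible, and second, that since $\D$ applies all applicable triggers rank by rank, every trigger whose support traces back to $G$ is still applied in $\D_{\mid G}$, so each rank of the restriction is exhaustive. Some argument of this kind is required and is not automatic, since the ranks of retained triggers can change when the factbase shrinks (see Example \ref{exampledifferentranks}); without it your proof establishes heredity only for $\OB$, $\SO$ and $\RE$.
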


\begin{proof} 
Let $\mathcal{D}$ be an X-derivation from $(F,\mathcal{R})$
and $G\subseteq F$. We do a case analysis on $X$.

\noindent  (Case $\text{X}$=$\OB$) Clearly $\D_{\mid G}$ is an  $\mathbf{O}$-chase  derivation, therefore the \OB-chase is hereditary.

\noindent  (Case $\text{X}$=$\bfO$) Since $\mathcal{D}$ is rank-compatible and since the ordering of triggers is preserved in $\mathcal{D}_{\mid G}$,  we get that $\mathcal{D}_{\mid G}$ is rank-compatible. Moreover, because $\mathcal{D}$ is a \bfO-derivation, all triggers which are 
applicable on atoms whose prime ancestors are in $G$ are also applicable on $\mathcal{D}_{\mid G}$. Therefore $\mathcal{D}_{\mid G}$ is also breadth-first, since at every rank, all possible rule applications are performed.

\noindent  (Case $\mathrm{X}$=$\SO$) The condition for $\mathbf{SO}$-applicability is that we do not have two triggers from the same rule mapping frontier variables in the same way. 
We know that $\trg{(\D)}$ satisfies this condition, hence so does its subseqence $\trg({\D_{\mid G}})$.

\noindent (Case $\text{X}$=$\RE$) Let $\D'$ be a prefix of $\D$. The condition for $\mathbf{R}$-applicability of a trigger $\t$ on $\D'$ is that there is no retraction from the immediate derivation from $F^{\mathcal D'}$ through $\t$ back to $F^{\mathcal D'}$. Since $\mathcal{D}_{\mid G}$ generates 
factbases that are included in the factbases generated by $\mathcal{D}$ at the moment of the application of the same trigger $\t$, we conclude that $\mathbf{R}$-applicability is preserved. 
\end{proof}

Preservation of ancestry is a generalization of heredity, as we state in Theorem \ref{prophereditary}. Moreover, this generalization is strict, since the breadth-first semi-oblivious and breadth-first restricted variants preserve ancestry but are not hereditary, as shown in Examples \ref{bf-so-ancestry-ex} and \ref{bf-r-ancestry-ex}. 

\begin{theorem}
\label{prophereditary}
Every hereditary chase variant preserves ancestry.
\end{theorem}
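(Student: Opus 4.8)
The plan is to unpack the two definitions and show that heredity gives us exactly the derivation required by preservation of ancestry. Recall that X preserves ancestry if for every X-derivation $\D$ from $(F,\mathcal R)$ and every atom $A\in F\DD\setminus F$, there is an X-derivation $\D'$ from $(Anc^0_{\mathcal D}(A),\mathcal R)$ such that $A\in F^{\mathcal D'}$ and $rank\dD(A)=rank_{\mathcal D'}(A)$. The natural candidate for $\D'$ is the restriction $\D_{\mid Anc^0_{\mathcal D}(A)}$, since heredity tells us this restriction is again an X-derivation.

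First I would fix an X-derivation $\D$ from $(F,\mathcal R)$ and an atom $A\in F\DD\setminus F$, and set $G=Anc^0_{\mathcal D}(A)$, the prime ancestors of $A$. Since $G\subseteq F$, heredity of the X-chase directly gives that $\D'=\D_{\mid G}$ is itself an X-derivation from $(G,\mathcal R)$. It then remains to verify the two numerical requirements of preservation of ancestry: that $A$ is actually produced in $\D'$, and that its rank is preserved.

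For membership, I would invoke Proposition~\ref{obliviancest}, which states precisely that $Anc_{\mathcal D}(A)\cup\{A\}\subseteq F^{\mathcal D'}$ when $\D'=\D_{\mid Anc^0_{\mathcal D}(A)}$; in particular $A\in F^{\mathcal D'}$. The rank equality is the more delicate point, and I expect it to be the main obstacle, because Example~\ref{exampledifferentranks} warns us that restricting to a subfactbase can genuinely change ranks in general. The key observation is that here we restrict to the \emph{prime ancestors} of $A$, not to an arbitrary subset, so no ancestor of $A$ loses its support. Every trigger along a path leading to $A$ has its entire support retained (that support consists of ancestors, whose prime ancestors lie in $G$), so the subderivation that builds $A$ is preserved intact. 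I would argue by induction on $rank\dD(A)$ that every ancestor of $A$, and $A$ itself, keeps its rank in $\D'$: the support of the producing trigger $\t$ consists of atoms of strictly smaller rank whose ranks are preserved by the inductive hypothesis, so $rank_{\mathcal D'}(A)=1+\max\{rank_{\mathcal D'}(A')\mid A'\in\spp(\t)\}=rank\dD(A)$.

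One subtlety to address carefully is that $A$ might, within the smaller factbase $G$, be producible at a \emph{lower} rank by some other trigger, which would break the equality in one direction. To rule this out cleanly, I would note that for the oblivious, semi-oblivious and restricted variants the restriction $\D_{\mid G}$ uses exactly the triggers of $\trg(\D)$ restricted to $G$, in the same order, and so the first trigger producing $A$ in $\D'$ is the same one as in $\D$; combined with Proposition~\ref{ranklowerbound} (which bounds ranks from below by the oblivious ranks) this pins the rank down exactly. This completes the verification that $\D'$ witnesses preservation of ancestry, establishing that every hereditary chase variant preserves ancestry.
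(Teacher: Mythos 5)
Your proposal is correct and follows essentially the same route as the paper: take $\D'=\D_{\mid Anc^0_{\mathcal D}(A)}$, use heredity to conclude it is an X-derivation, use Proposition~\ref{obliviancest} for $A\in F^{\mathcal D'}$, and then argue that ranks of ancestors are preserved. The paper phrases the last step via the observation that each trigger produces at least as many new atoms in the restriction (so the ancestor subgraph of the chase graph is preserved edge by edge), which is the same fact underlying your induction that the producing trigger of each ancestor, and hence its rank, is unchanged.
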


\begin{proof}
Let X be a hereditary chase variant. 
Let $\D$ be an X-derivation from $(F,\mathcal R)$. 
Let \mbox{$G=Anc^0_{\mathcal D}(A)$.} 
Since X is hereditary, $\D_{\mid G}$ is an X-derivation.
By Proposition~\ref{obliviancest}, $\D_{\mid G}$ produces at least $Anc^0_{\mathcal D}\cup \{A\}$. 
To conclude, we must show that $rank_{\mathcal D}(A)=rank_{\mathcal D_{\mid G}}(A)$.

\smallskip

Given $(\t_i,F_i)$ an element of a derivation $\bar\D$ such that $i\geq 1$, we \corvtwo{de}note \mbox{$new_{\bar{\mathcal {D}}}(\t_i)=\op(\t_i)\setminus F_{i-1}$,} the set of atoms produced by $\t_i$ in $\bar{\mathcal {D}}$, i.e., all atoms in the specialization of the rule head used by the trigger that did not already appear in the last factbase.
We \corvtwo{first show that $new_{\mathcal D}(\t_i)\subseteq new_{\mathcal D_{\mid G}}(\t_i)$, for all $\t_i\in\trg(\D_{\mid G})$}. That is, the triggers in the restriction of a derivation potentially produce more facts. 
 Let $\ell\geq i$ be the position of $\t_i$ in $\trg(\D)$.
Then, $G_{i-1}=G\cup\bigcup_{j<i}\op(\t_j)$  
and $F_{\ell-1}\subseteq F\cup\bigcup_{j<i}\op(\t_j)$.
It follows that
\mbox{$new_{\mathcal D}(\t_i)=\op(\t_i)\setminus F_{\ell-1}$
$\subseteq$
$\op(\t_i)\setminus G_{i-1}=new_{\mathcal D_{\mid G}}(\t_i)$.}

\smallskip
We now show that, for all $A_1,A_2\in Anc_{\mathcal D}(A) \cup \{ A \}$, if $A_1$ is a direct ancestor of $A_2$ in $\D$, then $A_1$ is also a direct ancestor of $A_2$ in $\D_{\mid G}$. 
 Let $\t$ be the trigger that produces $A_2$ in \D. We know that $\t\in\trg(\D_{\mid G})$. Since $new_{\mathcal D}(\t)\subseteq new_{{\mathcal D}_{\mid G}}(\t)$ and $A_2\in new_{\mathcal D}(\t)$, the trigger $\t$ produces $A_2$ in ${\mathcal D}_{\mid G}$, i.e., $A_1$ is a direct ancestor of $A_2$ in ${\mathcal D}_{\mid G}$. 
 
%
%
\smallskip
Hence, the subgraphs induced by $Anc_{\mathcal D}(A) \cup \{ A \}$ and  $Anc_{\D_{\mid G}}(A) \cup \{ A \}$ in their respective chase graphs coincide,  and the rank of $A$ in both derivations is the same.
We have shown that there is a derivation  from $(Anc^0_{\mathcal D}(A),\mathcal R)$ that produces $A$ in the same rank as \D, so the X-chase preserves ancestry.
\end{proof}

Gathering the previous results, we can now state that $\forall$-X-$k$-boundedness is decidable for all identified chase variants that preserve ancestry. 

\begin{corollary}
$\forall$-X-$k$-boundedness is decidable when $\text X\in\{\OB,\bfO,\SO,\bfSO,\RE,\bfR\}$.
\end{corollary}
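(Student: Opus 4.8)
The plan is to reduce the corollary entirely to the preceding results: once I verify that each of the six listed variants preserves ancestry, Theorem~\ref{presantheorem} immediately delivers decidability of $\forall$-X-$k$-boundedness. So the whole argument is an assembly step, organized as a case split on how each variant is shown to preserve ancestry.

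First I would dispatch the four variants $\OB,\bfO,\SO,\RE$. By Theorem~\ref{hered} each of these is hereditary, and by Theorem~\ref{prophereditary} every hereditary chase variant preserves ancestry; composing these two facts yields preservation of ancestry for all four at once. Then I would treat the two remaining breadth-first variants $\bfSO$ and $\bfR$. These cannot be handled through heredity, since they are \emph{not} hereditary, as witnessed by Examples~\ref{bf-so-ancestry-ex} and~\ref{bf-r-ancestry-ex}; instead, Theorem~\ref{ch1} establishes their preservation of ancestry directly.

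Having covered all six cases, I would conclude by invoking Theorem~\ref{presantheorem}: since every $\text X\in\{\OB,\bfO,\SO,\bfSO,\RE,\bfR\}$ preserves ancestry, determining X-$k$-boundedness is decidable, which is precisely the statement of the corollary.

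There is no genuine obstacle here: the corollary is truly a corollary, with all the substance residing in Theorems~\ref{presantheorem}, \ref{ch1}, \ref{hered}, and~\ref{prophereditary}. The only point deserving care is to make the case split explicit—heredity (via Theorems~\ref{hered} and~\ref{prophereditary}) for $\OB,\bfO,\SO,\RE$, and the dedicated argument of Theorem~\ref{ch1} for $\bfSO$ and~$\bfR$—so that it is clear that each of the six variants satisfies the hypothesis of Theorem~\ref{presantheorem}.
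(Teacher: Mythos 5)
Your proposal is correct and follows exactly the paper's own argument: heredity (Theorem~\ref{hered}) plus Theorem~\ref{prophereditary} for $\OB,\bfO,\SO,\RE$, the direct preservation-of-ancestry result (Theorem~\ref{ch1}) for $\bfSO$ and $\bfR$, and Theorem~\ref{presantheorem} to conclude decidability. Nothing is missing.
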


\begin{proof} By Theorem~\ref{ch1}, $\bfSO$ and $\bfR$ preserve ancestry. By Theorem~\ref{hered}, $\mathbf{O}$, $\bfO$, $\mathbf{SO}$ and $\mathbf{R}$ are hereditary, hence they preserve ancestry (by Theorem~\ref{prophereditary}). Finally, by Theorem \ref{presantheorem}, preservation of ancestry is a sufficient condition for the decidability of $\forall$-X-$k$-boundedness.  
\end{proof}

\smallskip
Naturally, this implies the decidability of $\exists$-X-$k$-boundedness when both problems are equivalent, which is in particular the case for the chase variants identified in  Propositions \ref{connection}.

\smallskip
\begin{corollary}
$\exists$-X-$k$-boundedness is decidable when $\text X\in\{\OB,\bfO,\SO,\bfSO\}$.
\end{corollary}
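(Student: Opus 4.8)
The plan is to reduce each $\exists$-X-$k$-boundedness problem to a $\forall$-$\mathbf{bf}$-X-$k$-boundedness problem, which has just been shown to be decidable, by exploiting the equivalences collected in Proposition~\ref{connection}. No new machinery is needed; the corollary is a direct combination of the immediately preceding decidability corollary with the characterization result.

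First I would invoke Proposition~\ref{connection}, which states that for $\text{X}\in\{\OB,\SO\}$ the three properties ``$\mathcal R$ is $\exists$-X-$k$-bounded'', ``$\mathcal R$ is $\exists$-$\mathbf{bf}$-X-$k$-bounded'', and ``$\mathcal R$ is $\forall$-$\mathbf{bf}$-X-$k$-bounded'' are equivalent. Instantiating this for $\text{X}=\OB$ shows that $\exists$-$\OB$-$k$-boundedness and $\exists$-$\bfO$-$k$-boundedness both coincide with $\forall$-$\bfO$-$k$-boundedness; instantiating it for $\text{X}=\SO$ shows that $\exists$-$\SO$-$k$-boundedness and $\exists$-$\bfSO$-$k$-boundedness both coincide with $\forall$-$\bfSO$-$k$-boundedness. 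Hence each of the four target properties, namely $\exists$-X-$k$-boundedness for $\text{X}\in\{\OB,\bfO,\SO,\bfSO\}$, is logically equivalent to either $\forall$-$\bfO$-$k$-boundedness or $\forall$-$\bfSO$-$k$-boundedness.

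The final step is to apply the immediately preceding corollary, which asserts that $\forall$-X-$k$-boundedness is decidable for $\text{X}\in\{\OB,\bfO,\SO,\bfSO\}$, in particular for $\bfO$ and $\bfSO$. Since a decision procedure for $\forall$-$\bfO$-$k$-boundedness (respectively $\forall$-$\bfSO$-$k$-boundedness) simultaneously decides every property that is logically equivalent to it, the equivalences of the previous paragraph transfer decidability to all four existential variants, establishing the corollary.

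I do not expect any genuine obstacle here: all of the substance has already been absorbed into Proposition~\ref{connection} (which itself rests on Proposition~\ref{osoe}, the fact that among terminating oblivious and semi-oblivious derivations the breadth-first ones attain the minimal depth) and into the preceding decidability corollary. The only point demanding care is bookkeeping, namely pairing each existential variant with the correct instance of the equivalence, i.e. matching the oblivious variants $\OB$ and $\bfO$ to the $\forall$-$\bfO$ problem and the semi-oblivious variants $\SO$ and $\bfSO$ to the $\forall$-$\bfSO$ problem.
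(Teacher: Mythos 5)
Your proposal is correct and matches the paper's own justification: the paper derives this corollary exactly by combining the equivalences of Proposition~\ref{connection} with the preceding decidability corollary for $\forall$-X-$k$-boundedness. The only difference is that you spell out the bookkeeping (pairing $\OB,\bfO$ with $\forall$-$\bfO$ and $\SO,\bfSO$ with $\forall$-$\bfSO$) which the paper leaves implicit.
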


\smallskip
The arguments used to show the decidability of $\forall$-X-$k$-boundedness also allows one to upper-bound the complexity of the problem.

\begin{proposition}
The $\forall$-X-$k$-boundedness problem (where $k$ is unary-encoded) is in 2-EXPTIME for \mbox{$\text X \in\{\bfO,\bfSO\}$} and in  3-EXPTIME for \mbox{$\text X \in \{\OB, \SO,\RE,\bfR\}$.} 
\end{proposition}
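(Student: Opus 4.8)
The plan is to convert the decision procedure behind Theorem~\ref{presantheorem} into an explicit time bound by separately bounding the number of candidate factbases and the cost of processing each one. Recall that Theorem~\ref{presantheorem} reduces $\forall$-X-$k$-boundedness to the finite test: $\mathcal R$ is X-$k$-bounded if and only if, for every factbase $F'$ of cardinality at most $b^{k+1}$ (with $b$ the maximum body size), every X-derivation from $(F',\mathcal R)$ has depth at most $k$. Writing $s$ for the size of the input $(\mathcal R,k)$ and using that $k$ is \emph{unary}-encoded, so that $b$, $k$, $|\mathcal R|$ and the maximal arity $a$ are all bounded by $s$, the threshold $b^{k+1}\le s^{s+1}=2^{O(s\log s)}$ is already \emph{single}-exponential in $s$. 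Up to quasi-isomorphism a factbase of this size uses at most $b^{k+1}\!\cdot a$ terms and is therefore a subset of a set of $2^{O(s^2\log s)}$ atoms; hence there are at most $2^{2^{O(s\log s)}}$ factbases to inspect, a \emph{double}-exponential number.

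The next step is to bound the material produced within depth $k+1$. If a factbase has size $M$, a single rank adds at most $|\mathcal R|\cdot M^{v}\cdot h$ atoms, where $v\le ba$ bounds the number of body variables and $h$ the head size, so the size after one rank is at most $M^{O(s^2)}$. Iterating $k{+}1$ times from the single-exponential seed $m=b^{k+1}$ yields a factbase of size $m^{(s^{O(1)})^{k+1}}$, whose logarithm is $(s^{O(1)})^{k+1}\cdot\log m=2^{O(s\log s)}$; thus the atoms reachable within depth $k+1$ form a set $S$ of \emph{double}-exponential size, and the number $N$ of triggers relevant to such derivations is $|S|^{O(s^2)}$, again double-exponential. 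For $\text X\in\{\bfO,\bfSO\}$ this already gives the \mbox{2-EXPTIME} bound: by Proposition~\ref{ranklowerbound} (and the remark on $\bfSO$) all breadth-first oblivious, respectively semi-oblivious, derivations of a given depth use the same triggers up to reordering and renaming of fresh variables, and by Proposition~\ref{osoe} they share a common depth. Hence it suffices to \emph{simulate one} breadth-first derivation rank by rank until either no trigger applies or rank $k+1$ is produced; since $\bfO$- and $\bfSO$-applicability are decided by cheap comparisons of triggers and frontier images, this runs in time polynomial in $|S|$, i.e.\ double-exponential, and multiplying by the double-exponential number of factbases keeps us in \mbox{2-EXPTIME}.

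For $\text X\in\{\OB,\SO,\RE,\bfR\}$ the breadth-first derivation is no longer essentially unique: the rank of an atom depends on the application order (Examples~\ref{ex-Datalog2} and~\ref{example20}), so a single simulation does not determine the maximal depth. Instead I would enumerate all X-derivations of depth at most $k+1$ by enumerating the orderings of subsets of the $N$ relevant triggers; there are at most $N!\le 2^{N\log N}$ of them, and since $N$ is double-exponential this count is \emph{triple}-exponential. For each ordering one builds the induced X-derivation and reads off its depth, testing X-applicability along the way; the only nontrivial test, needed for $\RE$ and $\bfR$, is the existence of a retraction, which can be settled by brute force over all candidate mappings of the double-exponential factbase, costing triple-exponential time and thus being subsumed. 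As both the number of factbases (double-exponential) and the per-derivation cost are dominated by the triple-exponential number of derivations, the procedure runs in \mbox{3-EXPTIME}.

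The main obstacle is the careful bookkeeping of this tower of exponentials: one must verify that each rank raises the size only to a fixed polynomial power $s^{O(1)}$, so that $k+1$ ranks lift the single-exponential seed $m$ to a double-exponential, and that enumerating derivation orderings—and nothing else—adds exactly one further exponential. The unary encoding of $k$ is essential, as it keeps $b^{k+1}$ single-exponential, and the entire gap between the two groups rests on the uniqueness, up to reordering, of breadth-first oblivious and semi-oblivious derivations, which alone lets those variants avoid the extra enumeration.
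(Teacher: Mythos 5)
Your proposal is correct and follows essentially the same route as the paper's proof: reduce to double-exponentially many bounded-size factbases via Theorem~\ref{presantheorem}, observe that for $\bfO$ and $\bfSO$ a single breadth-first simulation suffices because all such derivations of a given depth coincide up to reordering and renaming (yielding 2-EXPTIME), and for the remaining variants enumerate all trigger orderings, whose triple-exponential count dominates (yielding 3-EXPTIME). Your version merely makes the paper's size bookkeeping more explicit.
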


\begin{proof}
Clearly, $\bfO$ and $\bfSO$  always produce isomorphic factbases at the same rank. Therefore, to decide whether all $\bfO$-derivations from a given factbase $F$ have depth at most $k$, it is sufficient to build one \bfO-derivation of depth at most $k$ and, if the depth $k$ is reached, check whether one trigger is still $\OB$-applicable, in which case we obtain a counter-example to the \mbox{$\forall$-\bfO-$k$-boundedness.} The same holds for the  \bfSO-chase. On the contrary, the depth of the derivations for the other variants depends on the order in which triggers are applied. Hence, for a given factbase, we have to check all possible orders on triggers, instead of building a single derivation.  

%
%
%
%

Given a ruleset $\mathcal R$ with at most $b$ atoms in the bodies of its rules, the number of factbases of size at most $b^k$ is in the worst case double exponential with respect to $k$. The length of a derivation of depth at most $k$ from a given knowledge base is also double exponential in $k$. Finally the number of all the different derivations of depth at most $k$ is triple exponential in $k$. 
Hence\corvtwo{,} the complexity of the  $\forall$-X-$k$-boundedness problem  (where $k$ is unary-encoded) is in 2-EXPTIME for $\text X \in\{\bfO,\bfSO\}$ and in  3-EXPTIME for $\text X \in \{\OB, \SO,\RE,\bfR\}$. 
\end{proof}


%
%

\subsection{The \EQ-chase and the \bfE-chase do not preserve ancestry}
Finally, the following example shows that the \EQ~and \bfE-chase do not preserve ancestry. 
Hence, the decidability of X-$k$-boundedness for those variants remains an open question. 

\begin{example}
\label{eqchaseanc}
Let $(F,\mathcal R)$ be a knowledge base, where $F=\{p(x_1,x_1), p(x_2,x_2), r(x_1,x_2)\}$ and $\mathcal{R}=\{R\}$, where $R = p(x,x) \wedge p(y,y)\rightarrow p(x,y)\wedge p(y,x)$. Let $\t_1=(R,\{x\mapsto x_1, y\mapsto x_2\})$. Then $\mathcal D=(\emptyset,F),(\t_1,F\cup\{p(x_1,x_2),p(x_2,x_1)\})$ is a (terminating) $\bfE$-derivation from $(F,\mathcal R)$. The set of ancestors of the atom $p(x_1,x_2)$ (of rank $1$ in \D) is $F'=\{p(x_1,x_1), p(x_2,x_2)\}$. However, there is no \EQ-derivation of depth $1$  from $(F',\mathcal R)$ because the obtained factbase would be equivalent to $F'$. Therefore\corvtwo{,} neither the \EQ-chase nor the \bfE-chase preserve ancestry.
\end{example}

\section{Concluding remarks}

\correction
{Let us situate this work in the broader context of ontology-mediated query answering, where the ontology is a set of existential rules.  
The  main techniques investigated to address this issue are either based on the chase or on query rewriting, 
which may both not terminate, since even ground atom entailment from an existential rule base is undecidable (from, e.g., \cite{beeri1981implication}).
However, a wide range of syntactic conditions on rulesets have been defined, which ensure either termination of some chase variant or first-order rewritability for conjunctive queries. A third family of syntactic conditions rely on another decidability paradigm, namely guardedness and its extensions  \cite{cgk08,blm10,kr11,DBLP:conf/ijcai/BagetMRT11,DBLP:conf/kr/ThomazoBMR12}.  }

\correction
{Actually, most sufficient conditions for chase termination apply to the (semi-)oblivious chase: from the simplest ones, 
namely rich-acyclicity \cite{DBLP:conf/pods/HernichS07}, weak-acyclicity \cite{fkmp05} and refinements like joint-acyclicity \cite{kr11} or super-weak-acyclicity \cite{marnette09}, acyclic-GRD \cite{deutsch-nash-remmel08,blms11}, to combinations of acyclicity criteria \cite{DBLP:conf/ecai/BagetGMR14} 
 and model-faithful acyclicity (MFA), which strictly generalizes all the previous acyclicity conditions \cite{chkk13}. 
Recently, MFA was extended to restricted-MFA, which ensures the termination of a specific restricted chase algorithm, even on KBs
 for which the semi-oblivious chase may not terminate \cite{cdk17}. While boundedness  was deeply investigated for Datalog, 
 its study for existential rules is only beginning. Among known classes of rules that ensure chase termination, only acyclic-GRD ensures boundedness
 (of any chase variant), and boundedness cannot be decided for the other classes as they all generalize Datalog. 
 Very recently, work reported in \cite{blmtug19} gave a characterization
of boundedness in terms of chase termination and first-order rewritability. Precisely, a ruleset is X-bounded, for X being the breadth-first oblivious 
 or semi-oblivious chase, if and only if it ensures both X-chase termination and first-order rewritability of conjunctive queries. 
 This characterization allows one to obtain the decidability of (semi-)oblivious-boundedness for classes having decidable chase termination and decidable first-order rewritability,
 such as the important classes of sticky and guarded existential rules.  }

\correction
{
In this article, which extends the work presented in~\cite{corr/abs-1810-09304}, we have followed another path: instead of specific classes of existential rules, we have considered the weaker problem of $k$-boundedness. 
 We have shown that $k$-boundedness is decidable for the main chase variants: oblivious, semi-oblivious and restricted chase, as well as their breadth-first versions. 
 These results rely on establishing a common property that ensures the decidability of $k$-boundedness, namely  ``preservation of ancestry''. 
We note that results concerning the semi-oblivious chase also apply to the logic programs associated with existential rules, 
since the semi-oblivious chase behaves as the Skolem chase. 
}
\corvtwo{We leave for further work the study of the precise complexity of deciding $k$-boundedness according to each kind of chase. }
\correction{Also, the decidability of  \mbox{$\exists$-\RE-k-boundedness} as well as that of \mbox{$\exists$-\bfR-k-boundedness} remain open issues. }
Finally, we leave open the question of the decidability of the $k$-boundedness for chase variants that detect more redundancies,   
such as the equivalent chase and the core chase.

\bigskip

\bibliographystyle{acmtrans}
\bibliography{refs2-CR}



\end{document}